\definecolor{darkgreen}{rgb}{0, 0.6666, 0}
\def\s{S}
\def\i{I}
\def\d{D}
\def\a{A}
\def\r{R}
\def\t{T}
\def\h{H}
\def\e{E}
\def\th{d}
\def\eps{\varepsilon}
\def\GF{\textit{GF}}
\newif\ifdraft
\def\R{{\bf R}} %probably we should change this wit \mathbb{R} once we load the proper
\def\var{x}
\def\cvar{u}
\def\X{X}
\DeclareMathOperator{\vect}{vec}
\theoremstyle{plain}  
\newtheorem{proposition}{Proposition}
\newcommand*\samethanks[1][\value{footnote}]{\footnotemark[#1]}
\begin{document}
%
% paper title
\title{An Optimal Control Approach to Learning in SIDARTHE Epidemic model}

\author{Andrea~Zugarini\thanks{Universities of Florence and Siena, Italy. \texttt{\{andrea.zugarini,enrico.meloni\}@unifi.it}.} \And
        Enrico~Meloni\samethanks \And
        Alessandro~Betti\thanks{University of Siena, Siena, Italy. \texttt{alessandro.betti2@unisi.it, andrea.panizza75@gmail.com}.}\And
        Andrea~Panizza\samethanks \And
        Marco~Corneli\thanks{Université C\^ote d'Azur Center of Modeling, Simulation \& Interaction, Nice, France and Inria, CNRS, Laboratoire J.A. Dieudonné, Maasai research team, Nice, France. \texttt{marco.corneli@univ-cotedazur.fr}.} \And
        Marco~Gori\thanks{University of Siena, Siena, Italy and Inria, CNRS, I3S, Maasai,  Université C\^ote d'Azur, C\^ote d'Azur, France. \texttt{marco@diism.unisi.it}.}%
%\thanks{A. Zugarini and E. Meloni are with DINFO, University of Florence, Florence, Italy (e-mails: \texttt{\{andrea.zugarini,enrico.meloni\}@unifi.it}).}
%\thanks{A. Betti, A. Panizza are with Department of Information Engineering and Mathematics, University of Siena, Siena, Italy (e-mail: \texttt{alessandro.betti2@unisi.it, andrea.panizza75@gmail.com}).}
%\thanks{M. Corneli is with Université C\^ote d'Azur Center of Modeling, Simulation \& Interaction, Nice, France and Inria, CNRS, Laboratoire J.A. Dieudonné, Maasai research team, Nice, France (e-mail: \texttt{marco.corneli@univ-cotedazur.fr}).}
%\thanks{M. Gori is with Department of Information Engineering and Mathematics, University of Siena, Siena, Italy and Inria, CNRS, I3S, Maasai,  Université C\^ote d'Azur, C\^ote d'Azur, France (e-mail: \texttt{marco@diism.unisi.it})}% <-this % stops a space
}

% make the title area
\maketitle

% As a general rule, do not put math, special symbols or citations
% in the abstract or keywords.
\begin{abstract}
The COVID-19 outbreak has stimulated the interest in the proposal of novel epidemiological models to predict the course of the epidemic so as to help planning effective control strategies. In particular, in order to properly interpret the available data, it has become clear that one must go beyond most classic epidemiological models and consider models that, like the recently proposed SIDARTHE, offer a richer description of the stages of infection. The problem of learning the parameters of these models is of crucial importance especially when assuming that they are time-variant, which further enriches their effectiveness. In this paper we propose a general approach for learning time-variant parameters of dynamic compartmental models from epidemic data. 
We formulate the problem in terms of a functional risk that depends on the learning variables through the solutions of a dynamic system.
The resulting variational problem is then solved by using a gradient flow on a suitable, regularized functional. We forecast the epidemic evolution in Italy and France. Results indicate that the model provides reliable and challenging predictions over all available data as well as the fundamental role of the chosen strategy on the time-variant parameters. 
\end{abstract}

\section{Introduction}
The novel coronavirus that emerged in Wuhan, China, at the end of 2019, severe acute respiratory syndrome coronavirus 2 (SARS-CoV-2) \cite{chan_familial_2020}, quickly spread in China and then to the rest of the world. As of September 30th 2020, at least 215 countries have been impacted, with over 33 millions detected cases, and over 1 million deaths\footnote{\url{https://www.ecdc.europa.eu/en/geographical-distribution-2019-ncov-cases}}. Huge efforts are underway to contain the pandemic. In absence of specific vaccines or effective drugs against COVID-19, the disease caused by SARS-CoV-2, governments have resorted to non-pharmaceutical interventions to prevent its spread, such as social distancing, mask wearing, isolation of the infected and their contacts, and in many cases national lockdowns. %For example, in Italy a full lockdown was in effect from March 8 to May 4, exempting %only essential services, such as pharmacies, grocery stores, hospitals, sanitation %and so on. 

In the meantime, many researchers have focused their efforts on analyzing and forecasting the spread of COVID-19 \cite{chinazzi_effect_2020,bertozzi2020challenges,Giordano2020,zou_epidemic_2020,ferretti_quantifying_2020}. Predicting the effect of interventions, the evolution of the size of the outbreak, or the expected date for peak of active cases, are all results of paramount importance, that help policy makers to take the best decisions in the face of uncertainty. In order to obtain these results, a widely used class of epidemiological models is that of \textit{compartmental models}, such as the classical Susceptible-Infectious-Recovered (SIR) \cite{kermack1927contribution} and the Susceptible-Exposed-Infectious-Recovered (SEIR) models \cite{hethcote2000mathematics}. Compartmental models partition the population in disjoint groups, and, under the assumption of a homogeneous and uniformly mixed population \cite{murray_mathematical_2002}, they model the dynamics of each group as a system of constant-coefficient nonlinear Ordinary Differential Equations (ODE). The SIR and SEIR models, as well as variants such as SIDR \cite{Anastass2020} and SEIRDC \cite{Lin2020}, have been widely used to model the COVID-19 pandemic \cite{chinazzi_effect_2020, li_substantial_2020, ferguson2020report}, fitting the model parameters to the available public data. The mathematical properties of these models, such as the existence of a \textit{threshold phenomenon}, the possibility to estimate  the final size of the epidemic, the maximum number of infectious individuals at a given time and so on, are well-known and described for example in \cite{murray_mathematical_2002, hethcote2000mathematics, britton2010stochastic}.

An issue with fitting the standard SIR and SEIR models to publicly available data is the existence of a large fraction of undetected but infectious cases. As discussed in \cite{ferretti_quantifying_2020} and \cite{Giordano2020}, these undetected infections can often go unrecognized due to mildness of symptoms or lack thereof, thus exposing a far greater portion of the population to the virus than it would otherwise occur. 

In order to face the transmission due to undetected cases, in \cite{Giordano2020}, the authors consider a new epidemiological model, SIDARTHE, which extends the classical SIR model by discriminating between detected and undetected cases of infection, and different severity of illness.
The complex dynamic of the model is well suited for forecasting multiple aspects of the infection spread, and it achieves very interesting performance on predicting the pandemic evolution in both the Italian and French territories.
% The model, which relies on constant (or piecewise constant) rate coefficients, achieves very interesting performance on predicting the pandemic evolution in Italian regions. 

Typically, all the compartmental models assume rate coefficients to be constant in time. However, this assumption yields quite poor approximations over large observation windows during an outbreak. Clearly, the diffusion of a virus depends on multiple aspects that can change over time. A striking example is the case of national lock-downs aimed at dramatically containing the spread of the disease. In \cite{Giordano2020}, this issue is dealt with by assigning piece-wise constant coefficients in correspondence of lockdown policies changes, while in \cite{dehning_inferring_2020}, the authors model the coefficients as constant values separated by three linear transitions. 
%piece-wise constant segments, connected by linear segments at a prescribed number of change points, and fit the location and magnitude of the changes to data using Bayesian inference.
However, such solutions require either precise knowledge of when and how the scenario changes or at least fixing a priori the number of breakpoints. This becomes unfeasible when %etermining the points of change and/or their values 
there are multiple interacting phenomena such as local lockdowns, virus mutations, variations of treatments, therapies or infection screening.

% Another important issue is that the rate coefficients are assumed to be constant in time, at least in the standard formulation. This makes it more complicated to fit models to data over a time span including interventions which aim to modify such coefficients. For example, the main goal of a national lockdown is to dramatically decrease the transmission rate $\beta$, in order to bring the basic reproduction number $R_0$ below 1 \footnote{\color{red}{MC: Mi domando se sia una buona idea usare qui $\beta$ e $R_0$, mai introdotti prima. Forse meglio scrivere "to dramatically decrease the transmission rate, in order to bring the basic reproduction number below 1" ?}}. In this case, either two different models have to be used, before and after the intervention, or some \textit{a priori} parametric model for the time dependence of the various rates is assumed. For example, in \cite{dehning_inferring_2020}, the authors model the coefficients as piecewise constant segments, connected by linear segments at a prescribed number of change points, and fit the location and magnitude of the changes to data using Bayesian inference.
In this paper, we propose an approach for learning time variant parameters of dynamic compartmental models
% In this paper, we propose extending SIDARTHE to the case of time-variant coefficients 
and present an approach that nicely reflects the spirit of most machine learning algorithms.  We formulate the learning process within the framework of optimal control theory  \cite{kwakernak1972linear,lions_optimal_1971,becker2000adaptive}. 
Then we attack the problem of parameter estimation by using
a gradient flow algorithm that, throughout the paper, is referred to
as $GF$. The algorithm, which alternates steps of ODE solutions with
gradient estimation, is shown to be very effective thanks to an 
appropriate regularization of the model parameters which properly identifies their weight along the temporal window of simulation. 
In particular, we learn time-variant coefficients of SIDARTHE, but clearly the algorithm is suitable for any other compartmental model whenever supervised data is available.

This paper is organized as follows. After a brief review of SIDARTHE (Section~\ref{sec:the_model}), we introduce the proposed learning framework in Section~\ref{sec:estimation_of_sir_coefficients}, and report the experiments in
Section~\ref{sec:exp}. Finally, some conclusions are drawn in Section~\ref{sec:conclusions}.

\section{SIDARTHE}\label{sec:the_model}
In order to define the terminology and the notations that 
we will use in the remainder of the paper,
in this section, we give a brief review of SIDARTHE~\cite{Giordano2020}.
The model is a dynamical system described by eight ordinary 
differential equations in the variables
\begin{equation}\s(t),\,\i(t),\,\d(t),\,\a(t),\,\r(t),\,\t(t),\,\h(t),\,\e(t).\end{equation}
Each of these 
quantities represents the population of a different compartment 
of the model at a certain temporal instant $t$. In particular each temporal instant 
$t$ is mapped to:
\[
\begin{aligned}
&\s(t)=\hbox{\# {\it susceptible individuals}},\\
&\i(t)=\hbox{\# {\it asymptomatic infected} which are {\it undetected}},\\
&\d(t)=\hbox{\# {\it asymptomatic infected} which have been {\it detected}},\\
&\a(t)=\hbox{\# {\it symptomatic infected} which are {\it undetected}},\\
&\r(t)=\hbox{\# {\it symptomatic infected} which have been {\it detected}},\\
&\t(t)=\hbox{\# \vtop{
\hsize=17pc\noindent{\it acutely symptomatic infected}{\it detected},}}\\
&\h(t)=\hbox{\# {\it healed}},\\
&\e(t)=\hbox{\# {\it deceased}}.
\end{aligned}
\]
The problem is then formally defined in terms of the Cauchy problem for the following ODE system\footnote{In~\cite{Giordano2020} they choose $\phi=\chi\equiv0$.}
\begin{equation}\label{sidarthe:eq}\medmuskip -1mu
\begin{cases}
\dot\s(t)=-\s(t)\bigr(\alpha\i(t)+\beta\d(t)+\gamma\a(t)
           +\delta\r(t)\bigl);\\
\dot\i(t)=\s(t)\bigr(\alpha\i(t)+\beta\d(t)+\gamma\a(t)+\delta\r(t)\bigl)
           -(\eps+\zeta+\lambda)\i(t);\\
\dot\d(t)=\eps\i(t)-(\eta+\rho)\d(t);\\
\dot\a(t)=\zeta\i(t)-(\theta+\mu+\kappa+\phi)\a(t);\\
\dot\r(t)=\eta\d(t)+\theta\a(t)-(\nu+\xi+\chi)\r(t);\\
\dot\t(t)=\mu\a(t)+\nu\r(t)-(\sigma+\tau)\t(t);\\
\dot\h(t)=\lambda\i(t)+\rho\d(t)+\kappa\a(t)+\xi\r(t)+\sigma\t(t);\\
\dot\e(t)=\phi\a(t)+\chi\r(t)+\tau\t(t),
\end{cases}
\end{equation}
where
\begin{equation}
\alpha,\,\beta,\,\gamma,\,\delta,\,\eps,\,\zeta,\,\eta,\,\theta,\,\kappa,\,\lambda,\, \mu,
\,\nu,\,\xi,\,\rho,\,\sigma,\,\phi,\,\chi,\,\tau,
\label{params}
\end{equation}
are the rates that specify the velocity of the flows 
between the compartments of the model, with the initial conditions
% \begin{equation}
% \begin{split}
% &\draftA{(\s(0),\i(0),\d(0),\a(0),\r(0),\t(0),\h(0),\e(0))}\\
% &\quad\qquad\qquad=(\s^0,\i^0,\d^0,\a^0,\r^0,\t^0,\h^0,\e^0)\draftA{=:z_0}
% \end{split}
% \label{initial-cond}
% \end{equation}

\begin{equation}
(\s^0,\i^0,\d^0,\a^0,\r^0,\t^0,\h^0,\e^0)=:z_0.
\label{initial-cond}
\end{equation}

% \begin{equation}
% \begin{cases}
% \s(0)=\s^0;\\
% \i(0)=\i^0;\\
% \d(0)=\d^0;\\
% \a(0)=\a^0;\\
% \r(0)=\r^0;\\
% \t(0)=\t^0;\\
% \h(0)=\h^0;\\
% \e(0)=\e^0.
% \end{cases}
% \end{equation}

%\begin{itemize}[noitemsep]
%\item $t\mapsto \s(t)$: maps $t$ to the number of {\em susceptible individuals};
%\item $t\mapsto \i(t)$: maps $t$ to the number of {\em asymptomatic
%infected} which are {\em undetected};
%\item $t\mapsto\d(t)$: maps $t$ to the number of {\em asymptomatic
%infected} which have been {\em detected};
%\item $t\mapsto\a(t)$: maps $t$ to the number of {\em symptomatic
%infected} which are {\em undetected};
%\item $t\mapsto\r(t)$: maps $t$ to the number of {\em symptomatic
%infected} which have been {\em detected};
%\item $t\mapsto\t(t)$: maps $t$ to the number of {\em acutely symptomatic
%infected} which have been {\em detected};
%\item $t\mapsto\h(t)$: maps $t$ to the number of {\em healed};
%\item $t\mapsto\e(t)$: maps $t$ to the number of {\em deceased};
%\end{itemize}

\begin{figure*}
\def\legbox{\vrule width10pt height6pt depth1pt}
\def\<#1>{$\lower 2pt\hbox{\includegraphics{./sidarthe-#1.mps}}$}
\hbox{\includegraphics{sidarthe-3.mps}
\hskip 2pc
\vbox{\hsize=13pc\footnotesize LEGEND
\medskip
\par Vertices
\smallskip
\halign{\hfil#\,:\,& #\hfil\cr
\<4>& A state of SIDARTHE\cr
\noalign{\smallskip}
\<5>& An {\it undetected} state of SIDARTHE\cr
\noalign{\smallskip}
\<6>& A {\it fitted} state of SIDARTHE\cr
}
\medskip
\par Arcs
\smallskip
\par \textcolor{red}{\legbox} : Critical symptoms rates
\par \textcolor{blue}{\legbox} : Death rates 
\par \textcolor{darkgreen}{\legbox} : Healing rates
\vskip 1pc
}
\hskip -.5pc
\vbox{\hsize=13pc\footnotesize \phantom{LEGEND}
\medskip
\halign{\hfil$#$:\,& #\hfil\cr
\s& Susceptibles\cr
\i& Undetected asymptomatic\cr
\d& Detected asymptomatic\cr
\a& Undetected with symptoms\cr
\r& Detected with symptoms\cr
\t& Detected with acute symptoms\cr
\h& Healed\cr
\e& Deceased\cr
}
\vskip 1pc
}}
\caption{A DAG that shows the flow of a population through the compartments of the 
SIDARTHE model.}
\label{fig:graph}
\end{figure*}
In particular (see also Fig.~\ref{fig:graph}) we have that $\alpha$, $\beta$, $\gamma$ and
$\delta$ are the infection rates  between $\s$ and $\i$, $\d$, $\a$ and $\r$ respectively. Notice that these rates 
could be compared with the infection rate of the plain SIR model (the term in front of the bilinear term in the update rules of the susceptible
and the infected). The coefficients $\eps$ and $\theta$ govern the rate at which the asymptomatic and symptomatic undetected infected $\i$ and
$\a$ are detected, while $\zeta$ and $\eta$ are responsible for the transition between the asymptomatic and symptomatic classes (namely from $\i$ and 
$\d$ to $\a$ and $\r$). The quantities $\mu$ and $\nu$ control the flow from the symptomatic infected detected $\r$ and the symptomatic 
infected undetected $\a$ to the acutely symptomatic infected class $\t$ that, in turn, is connected to the set of deceased individuals $\e$
through the rate $\tau$. We also extend the SIDARTHE model
presented in~\cite{Giordano2020} with connections from $\a$ and $\r$ to $\e$, namely $\phi$ and $\chi$, to detect deceases outside Intensive Care Units (ICUs), as the ones occurred in elderly care facilities. Finally, $\kappa$, $\lambda$, $\xi$, $\rho$ and $\sigma$ represent the recovery rates.
Since the flows of the population through the eight compartments are directed (indeed the graph in Fig.~\ref{fig:graph} is a dag) all the rates must be non-negative.

The constants $\s^0$, $\i^0$, $\d^0$, $\a^0$, $\r^0$, $\t^0$, $\h^0$ and $\e^0$ in Eq.~\eqref{initial-cond} are assumed to be real non-negative
values and coupled with the SIDARTHE differential equations they specify a Cauchy problem.
Notice that if $\i^0=\d^0=\a^0=\r^0\equiv0$ then the infection cannot begin. 
From Eq.~\eqref{sidarthe:eq} it is also immediate to see that the total population is 
conserved since $\dot\s+\dot\i+\dot\d+\dot\a+\dot\r+\dot\t+\dot\h+\dot\e=0$.
As it is argued in~\cite{Giordano2020} an appropriate definition of 
the basic reproduction number in this model is 
%the following
\begin{equation}\label{eq:R_0}
\begin{split}
R_0:=\frac{1}{\eps+\xi}\biggl(\alpha&+
\frac{\beta\eps}{\eta+\rho}+\frac{\gamma\zeta}{\theta+\mu+\kappa+\phi}\\
&+\frac{\delta}{\nu+\xi+\chi}\biggl(\frac{\eta\eps}{\eta+\rho}
+\frac{\zeta\theta}{\theta+\mu+\kappa}\biggr)\biggr).
\end{split}\end{equation}
\autoref{eq:R_0} was appropriately modified to account for the inclusion of $\phi$ and $\chi$. In the SIDARTHE model, all the rates in Eq.~\eqref{params} are constant over time,
and are only changed in windows where different lockdown policies are defined. 
However, virus aggressiveness, social behavior, 
climate changes and different treatment of the disease, may all change during the development of the outbreak, motivating the extension of Eq.~\eqref{sidarthe:eq} to the case of truly time-variant coefficients.
In the next section we will discuss how it is possible to learn from data, in a meaningful way, 
the coefficients in Eq.~\eqref{params} as functions of time over the horizon $[0,T]$.

% $t\mapsto\alpha(t)$, $t\mapsto\beta(t)$, $t\mapsto\gamma(t)$, $t\mapsto\delta(t)$, $t\mapsto\eps(t)$,
% $t\mapsto\zeta(t)$, $t\mapsto\eta(t)$, $t\mapsto\theta(t)$, $t\mapsto\kappa(t)$, $t\mapsto\lambda(t)$,
% $t\mapsto\mu(t)$, $t\mapsto\nu(t)$, $t\mapsto\xi(t)$, $t\mapsto\rho(t)$, $t\mapsto\sigma(t)$, $t\mapsto\phi(t)$, $t\mapsto\chi(t)$ and $t\mapsto\tau(t)$ over a temporal interval $[0,T]$.

\section{Learning the SIDARTHE coefficients}\label{sec:estimation_of_sir_coefficients}
Let $\cvar\colon[0,T]\to\R^{18}$ be the map
\[
\begin{split}
u(t) = &(\alpha(t),\beta(t),\gamma(t),\delta(t),\eps(t),\zeta(t), \eta(t), \theta(t), \kappa(t),\\
&\quad\lambda(t),\mu(t), \nu(t), \xi(t), \rho(t), \sigma(t), \phi(t), \chi(t), \tau(t)),
\end{split}\] 
belonging to the functional space\footnote{Here we assume that $\X$ is Hilbert.} $\X$, 
and $z\colon[0,T]\to\R^8$ the vector valued function 
\[z(t):=(\s(t),\i(t),\d(t),\a(t),\r(t),\t(t),\h(t),\e(t)).\]
Let $\overline\d(\cdot,\cvar,z_0)$ be the solution for the variable $\d$ of Eq.~\eqref{sidarthe:eq} when 
the coefficients are the components of the function $u$, and the initial values of the 
compartments are specified by the values of $z_0\in\R^8$. In a similar manner 
let us also define $\overline\r$, $\overline\t$ and $\overline\e$ so that each of such quantities, considered as functions of
all their arguments, maps $[0,T]\times\X\times\R^8\to\R$. Lastly let
\begin{equation}\label{eq:recovered-diagnosed}
\begin{split}
\overline\h_d(t,\cvar,z_0):=\int_0^t &\rho(s)\overline\d(s,\cvar,z_0)+\xi(s)\overline\r(s,\cvar,z_0)\\ &\quad+\sigma(s)
\overline\t(s,\cvar,z_0)\, ds,\end{split}\end{equation}
which, roughly speaking, represents the number of diagnosed individuals who recovered when we initialize
 Eq.~\eqref{sidarthe:eq} with $z_0$ and for a given choice $u$ of the various rates.
 
The quantities $\overline\d$, $\overline\r$, $\overline\t$, $\overline\h_d$ and $\overline\e$
are the basic ingredients to define the risk that we will use to define the learning 
task. Indeed let us define $\varphi\colon [0,T]\times\X\to\R$ the following quadratic error
% \draftE{
%     \begin{equation}
%         \begin{split}
%             \varphi(t,u(.)) = \sum_{l\in L}\frac{e_l}{2}(\overline{l}(t,u,z_0)-\hat{l}(t))^2\\
%             L = \{D, R, T, H_d, E\}
%          \end{split}
%      \end{equation}
%  }
%  \draftA{L'idea mi piace ma c'\`e il rischio che chi legge rimanga confuso dalla notazione, mentre con
%  la formula scritta per esplicito mi pare pi\`u comprensibile. Tanto pi\`u che nel testo si avverte
%  gi\`a che \`e una loss quadratica per cui la struttura della formula dovrebbe essere chiara. In mancanza di 
%  un'idea migliore lascerei quello che gi\`a c'era...}
 
 \[\medmuskip-1mu
 \begin{split}
 \varphi(t,\cvar):=&\frac{e_\d}{2}(\overline\d(t,u,z_0)-\hat\d(t))^2+\frac{e_\r}{2}(\overline\r(t,u,z_0)-\hat\r(t))^2\\
&\, +\frac{e_\t}{2}(\overline\t(t,u,z_0)-\hat\t(t))^2+
 \frac{e_\h}{2}(\overline\h_d(t,u,z_0)-\hat\h(t))^2\\
 &\quad+\frac{e_\e}{2}(\overline\e(t,u,z_0)-\hat\e(t))^2,
 \end{split}\]
 where $\hat\d$, $\hat\r$, $\hat\t$, $\hat\h$ and $\hat\e$ are the observed time series and 
 $e_\d$, $e_\r$, $e_\t$, $e_\h$ and $e_\e$ are positive constants. 
 %Note that we do not include a quadratic error to enforce the fit on the deceased, because in the SIDARTHE model individuals can arrive in the $\e$ compartment only through the $\t$ compartment. However, the available data include the total number of deceased individuals, not just those who died after being treated in a ICU (Intensive Care Unit). 
 
Let $F\colon\X\to\R$ be\footnote{An appropriate choice for the functional 
space $\X$ in this case
could be $\X=H^1([0,T];\R^{18})$.}
\begin{equation}\label{eq:functional}
F(\cvar):=\int_0^T\frac{m}{2}|\dot\cvar(t)|^2+\varphi(t,u)\, dt,
\end{equation}
with $m>0$. Notice that this is an integral of a Lagrangian that is non-local in time since 
$\varphi$ depends on the whole trajectory of the variables $\cvar$ and not just
on the their values at time $t$.
Then, the learning of $\cvar$ corresponds to the following optimization problem 
\begin{equation}\label{eq:min-problem}
\min_{u\in \X} F(u).
\end{equation}
This problem resembles identification and optimal control problems that are
associated with the minimization of $F$, that can be tackled by means of the theory of Lagrange's multipliers~\cite{kwakernak1972linear}.
%\draftA{Giaquinta would probably be a better references}.
In this case the states $z$ are promoted to variables of the problem,
so that the quadratic error $\varphi$ can be written directly in terms of the components of $z$;
for example the term $(\overline\d(t,u)-\hat\d)^2\to(z_3-\hat\d)^2$. The minimization problem then 
is solved under the constrained dynamic of $z$ given by the SIDARTHE system of the 
form $\dot z(t)=\Phi(z(t),u(t))$ ($\Phi$ here can be deduced by the right-hand-side 
of the differential equation in~\eqref{sidarthe:eq}). Then the problem
Eq.~\eqref{eq:min-problem} can be recast into the following form:
\begin{equation}\label{eq:min-2}
\medmuskip-1mu
\begin{aligned}
\min_{\substack{\cvar\in\X\\ z\in Y}} &\quad
\begin{aligned}\int_0^T&\frac{m}{2}|\dot\cvar|^2+
\frac{e_\d}{2}(z_3-\hat\d)^2+\frac{e_\r}{2}(z_5-\hat\r)^2\\
&\quad+\frac{e_\t}{2}(z_6-\hat\t)^2+
 \frac{e_\h}{2}(\h_d(\cdot,z,u)-\hat\h)^2;\end{aligned}\\
 \hbox{subject to}&\quad\; \dot z=\Phi(z,u),
\end{aligned}
\end{equation}
where $Y$ is an appropriate functional space that contains 
functions $z$ satisfying the initial condition $z(0)=z_0$ and
\begin{equation}\label{eq:non-local-term}
\h_d(t,z,u):=\int_0^t \cvar_{14}z_3+\cvar_{13}z_5+\cvar_{15} z_6.
\end{equation}
Following this
approach, the solution is usually achieved by imposing the  stationarity condition
on~\eqref{eq:min-2},
which yields the Euler-Lagrange differential equations with appropriate
boundary conditions over the temporal variable $t$. In this problem,
however, the presence of the additional non-locality due to~\eqref{eq:non-local-term}
that persists also in the reformulation~\eqref{eq:min-2}
requires, for instance, to regard $\h_d$ as another variable and to add 
the differential equation for $\dot\h_d$, that can be 
readily be inferred from Eq~\eqref{eq:non-local-term}, to the constraints
$\dot z=\Phi(z,u)$.

While the parameter estimation based on Eq.~\eqref{eq:min-2} constitutes by itself a very interesting and promising research direction,
in this paper, we propose to pursue the minimization of~\eqref{eq:functional}
through a more direct approach, i.e. by approximating the
gradient flow $\cvar'=-\nabla F$ (see~\cite{ambrosio2008gradient}) by an 
explicit method that updates the trajectories $t\mapsto u(t)$ starting from 
a fixed initial configuration $u^0\in\X$. For this reason we are referring to 
the learning approach proposed in this paper as \GF\/ (Gradient Flow).
In practice the proposed learning algorithm is an implementation of the
following update rule
\begin{equation}
\cvar^{k+1}=\cvar^k-\nabla F(u^k),\quad k\ge0,\label{eq:grad-flow-explicit}
\end{equation}
where $\nabla F$ is (when it exists) the Fr\'echet derivative (see~\cite{ambrosetti1995primer}) 
of $F$ and
$\cvar^0\in\X$ is assigned. The term $|\dot u|^2/2$ in Eq.~\eqref{eq:functional}
is extremely important for the well-posedness of the learning
problem: the minimization of the mean quadratic loss alone could in principle
lead to highly irregular solutions  that have a low degree of generalization
power. Moreover the term $\Vert\dot u\Vert_{L^2}$ gives coerciveness to the
whole functional making it more suitable to be the objective of a minimization problem. This term yields a parsimonious solution where abrupt changes
are penalized.
Due to the presence of $|\dot u|^2$, the stationarity condition on the functional in 
Eq.~\eqref{eq:functional} also suggests that the derivatives of stationary 
points of $F$ on the boundaries $t=0$ and $t=T$ must be vanishing, thus 
offering an interesting consistency check for the numerical solutions that
we find. Indeed, we verified experimentally that this condition generally holds true on the learned parameters.
%The derivatives at the boundaries are close to $0$, and they become closer to the null value as the derivative term in the $F$ functional reaches convergence with respect to the flow described by Eq.~\eqref{eq:grad-flow-explicit} (see  Section~\ref{sec:exp}).}

%\draftAP{in realtà questa cosa in sezione 4 non la facciamo vedere, e anzi dai grafici non risulta una derivata nulla in t=0, forse per ragioni di convergenza oppure per via dello schema numerico usato, che non è consistente in t=0 con la soluzione analitica. Pertanto eviterei di menzionarlo}

Before going on to the description of the algorithm in terms of a time discrete version of~\eqref{eq:grad-flow-explicit} which is machine implementable, 
we notice that we can softly enforce the positivity of the parameters 
by adding to the functional $F$ the term $e_P\int_0^T 1_{\{u(t)<0\}}(t)\, dt$,
where $1_A$ is the indicator function of the set $A$ and $e_P$ is a positive
constant.

\paragraph{Algorithmic details}
Consider a uniform partition $0=t_0<t_1<\cdots<t_N=T$ of the interval $[0,T]$
where $|t_{i+1}-t_i|=:\Delta t$ for all $i=0,\dots,N-1$.
Let  $f\colon\R^{18(N+1)}\to[0,+\infty)$ that maps $x\mapsto f(x)$ to be the ``discretized''
version of the functional $F$ where a point $x\in\R^{18(N+1)}$ in its domain can be tought 
as the concatenation of all the parameters sampled at the time grid defined above. 
The discrete counterpart of~\eqref{eq:grad-flow-explicit} is a classical gradient descent method
which starts from the value
$\var^0$ (whose components are the sampling of $u^0$ on the various $t_j$) 
and compute the vector sequence $\var^1, \var^2,\dots$ according to the update rule
\begin{equation}\label{eq:grad-desc}
\var^{k+1}=\var^k-\pi \nabla f(\var^k),
\end{equation}
where $\pi>0$ is the learning rate, and where now $\nabla f$ is the 
ordinary gradient in $\R^{18(N+1)}$ which can be computed at each step once
we choose a numerical solver for Eq.~\eqref{sidarthe:eq}. After that the 
SIDARTHE equations are numerically integrated, the non-local term $\varphi$
becomes simply a function of the variable $\var\in\R^{18(N+1)}$.
We found that the update rule~\eqref{eq:grad-desc} that defines the
gradient flow suffers a  normalization problem  which affects
the parameters at different time. Basically, since
the term $\varphi$ in Eq.~\eqref{eq:functional}  depends on 
the variables $\var$ though 
a numerical integration of~\eqref{sidarthe:eq}, changes in early (in time) parameters
result in greater variations of $\varphi$ than changes in
later (in time) parameters, since the latter will only affect the solution of the 
SIDARTHE in the last part of the interval $[0,T]$, 
whereas the former parameters contribute to modify the potential
$\varphi$ on most of the interval. This suggest that the components 
of $\nabla f$ that corresponds to $t_i$ close to $T$ are negligible
with respect to the same quantity evaluated at earlier times. This
makes the learning process either extremely unstable or exceedingly slow.  
In order to overcome this 
problem we modified the update rule~\eqref{eq:grad-desc} by introducing a 
regularization that is conceived for propagating the gradients from earlier
to later times:
\begin{equation}\label{eq:reg}
\medmuskip-1mu\mskip-2mu
\hat\var_i^{k+1}\mskip-4mu=
\begin{cases}
\hat\var_i^k-\pi_0(\nabla\hat f(\hat\var^k))_0& \hbox{if\ } i=0;\\
\hat\var_i^k-\pi_i(\nabla\hat f(\hat\var^k))_i+\omega_i(\hat\var_{i-1}^{k+1}-\hat\var_{i-1}^{k}) 
\mskip -5mu& 
\hbox{if\ }i>0,
\end{cases}\end{equation}
where $\hat x_i\in\R^{18}$ for $i=0,\dots,N$ are the slices of $x$ that correspond to the value of the
parameters at time $t_i$. The function $\hat f$ is defined accordingly (for further details
see Appendix~\ref{appendix}).
We choose
\begin{equation}\label{eq:update_rule_params}
\pi_i\equiv\pi(t_i):=\frac{\pi_0}{1+at_i},\quad \omega_i\equiv\omega(t_i):=\frac{1}{1+e^{-bt_i}},
\end{equation}
where $\pi_0$, $a$ and $b$ positive parameters.

Note that the above scheme makes sense when we pass to the 
continuous limit with respect to the index $i$. For this reason the following proposition
is of interest
\begin{proposition}
If the solutions of the gradient flow $u'=-\nabla F(u)$ are continuous function of 
time, then Eq.~\eqref{eq:reg} is a discrete approximation of the following update rule for
$\cvar$:
\[
u^{k+1}(t)=u^k(t)-\frac{\pi(t)}{1-\omega(t)}(\nabla F(u^k))(t).
\]
\end{proposition}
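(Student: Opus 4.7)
The plan is to read the recursion \eqref{eq:reg} as a consistent discretization of a pointwise-in-$t$ algebraic equation for the increment $u^{k+1}(t)-u^k(t)$, and then to solve that equation. First I would introduce $\Delta_i^k := \hat\var_i^{k+1} - \hat\var_i^k$ and rewrite the $i>0$ branch of \eqref{eq:reg} as
\[
\Delta_i^k = -\pi_i\,(\nabla \hat f(\hat\var^k))_i + \omega_i\,\Delta_{i-1}^k,
\]
so that the scheme is manifestly a first-order recursion on the updates themselves rather than on the iterates. Setting $\Delta^k(t) := u^{k+1}(t) - u^k(t)$, the hypothesis that solutions of the continuous gradient flow depend continuously on $t$ ensures that $\Delta^k$ is continuous in $t$.

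Next I would pass to the continuous limit $\Delta t \to 0$. Under the sampling $\hat\var_i^k \leftrightarrow u^k(t_i)$ and the definition of $\hat f$ recalled in Appendix~\ref{appendix}, the slice $(\nabla \hat f(\hat\var^k))_i$ is a consistent discretization of the Fr\'echet derivative $(\nabla F(u^k))(t_i)$, while \eqref{eq:update_rule_params} gives $\pi_i = \pi(t_i)$ and $\omega_i = \omega(t_i)$. Continuity of $\Delta^k$ supplies $\Delta_{i-1}^k = \Delta^k(t_i - \Delta t) = \Delta^k(t_i) + o(1)$, so the recursion collapses in the limit to the pointwise identity
\[
\Delta^k(t) = -\pi(t)\,(\nabla F(u^k))(t) + \omega(t)\,\Delta^k(t).
\]
Since $\omega(t) = 1/(1+e^{-bt}) \in (0,1)$, the factor $1-\omega(t)$ never vanishes, and solving for $\Delta^k(t)$ yields
\[
u^{k+1}(t) - u^k(t) = -\frac{\pi(t)}{1-\omega(t)}\,(\nabla F(u^k))(t),
\]
as claimed.

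The main technical delicacy lies in the identification of the Euclidean gradient $\nabla \hat f$ with a discrete sample of the functional gradient $\nabla F$: depending on the normalization chosen in the construction of $\hat f$ from $F$, one typically has to absorb factors of $\Delta t$ to ensure consistency, and it is precisely at that step that Appendix~\ref{appendix} enters. A secondary but conceptually central point is the role of the continuity hypothesis: without it one cannot legitimately replace $\Delta^k(t-\Delta t)$ by $\Delta^k(t)$ in the limit, and the algebraic collapse that produces the characteristic factor $1/(1-\omega(t))$ would fail. The $i=0$ boundary case in \eqref{eq:reg} is a single-point discrepancy that has no effect on the continuous limit and may be ignored.
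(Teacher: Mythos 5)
Your proposal is correct and follows essentially the same route as the paper's own proof: continuity in $t$ lets you replace the lagged increment $\hat\var_{i-1}^{k+1}-\hat\var_{i-1}^{k}$ by $u^{k+1}(t)-u^k(t)$ in the limit $\Delta t\to0$, after which the relation $(1-\omega(t))(u^{k+1}(t)-u^k(t))=-\pi(t)(\nabla F(u^k))(t)$ is solved for the increment. Your added remarks (writing the scheme as a recursion on increments, noting $1-\omega(t)>0$, and flagging the $\Delta t$ normalization in identifying $\nabla\hat f$ with $\nabla F$) only make explicit what the paper leaves implicit.
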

\begin{proof}
It is sufficient to notice that, since we look for a solution which is continuous in $t$ in the limit 
$\Delta t\to 0$ we must have, for each $k\ge0$, $\vert\hat\var^k_{i-1}-\hat\var^k_{i}\vert\to 0$.
Then Eq.~\eqref{eq:reg} in the continuous limit becomes 
\[(1-\omega(t))(u^{k+1}(t)-u^k(t))=-\pi(t)(\nabla F(u^k))(t),\]
which is exactly what we wanted to prove.
\end{proof}
This proposition shows that the update scheme defined in
Eq.~\eqref{eq:reg} is basically equivalent to introducing an increasing, time-dependent learning rate.
Notice that, the term proportional to $\omega$ in Eq.~\eqref{eq:reg}
is reminiscent of the classic momentum term \cite{mcclelland1986parallel}. However, 
it also involves relations in the temporal domain which are neglected in other frameworks. 
Due to this similarity, 
in what follows, we refer to this term as the
\textit{temporal momentum}.

Function $\pi(t)/(1-\omega(t))$ is, with appropriate choices of the parameters $a$ and $b$, a monotonically
increasing function for $t>0$ and in particular for large $t$ we have
$
\pi(t)/(1-\omega(t))\simeq \pi_0 e^{b t}/at 
$.

\section{Experiments}\label{sec:exp}
\def\|#1|{\includegraphics[width=0.32\textwidth]{#1}}
\begin{figure*}
\centering
\|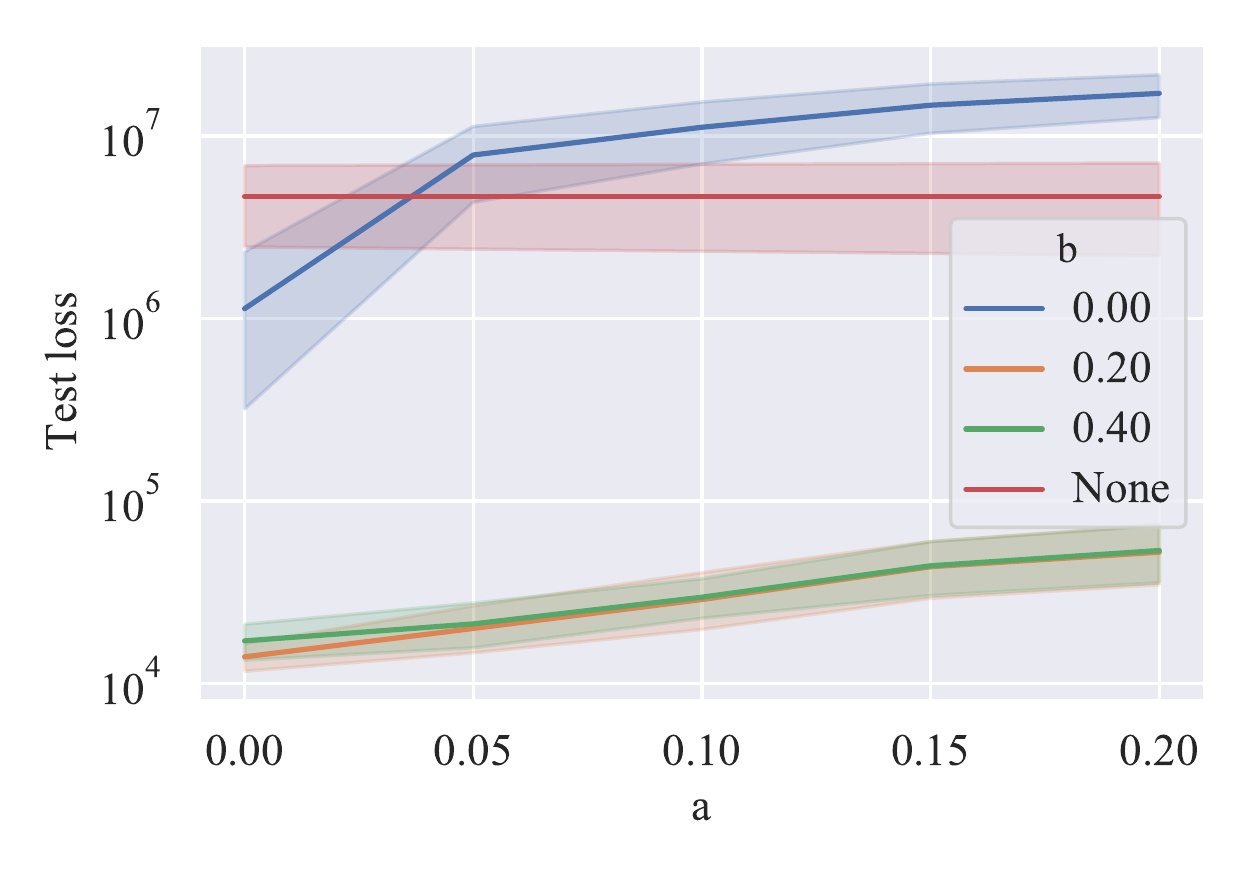|
\hfil
\|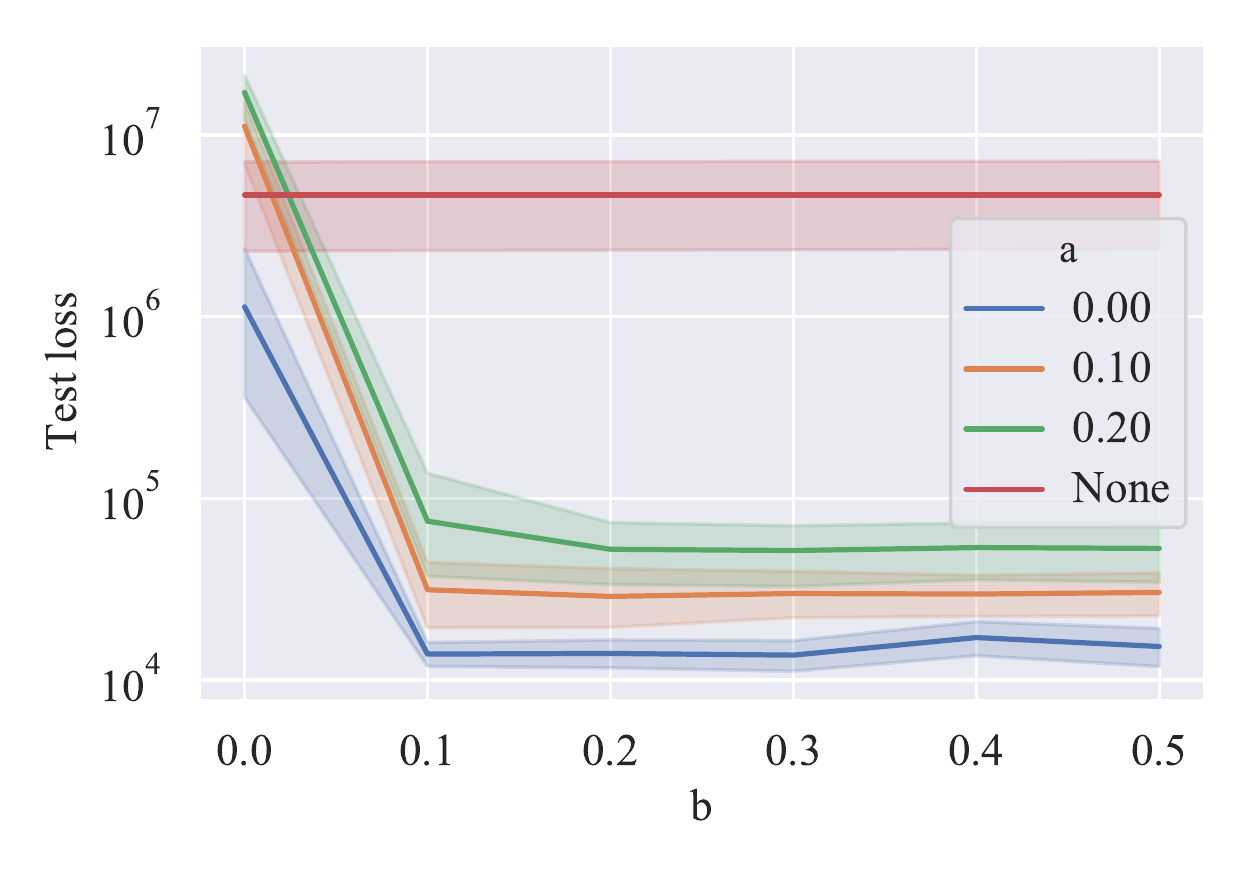|
\hfil
\|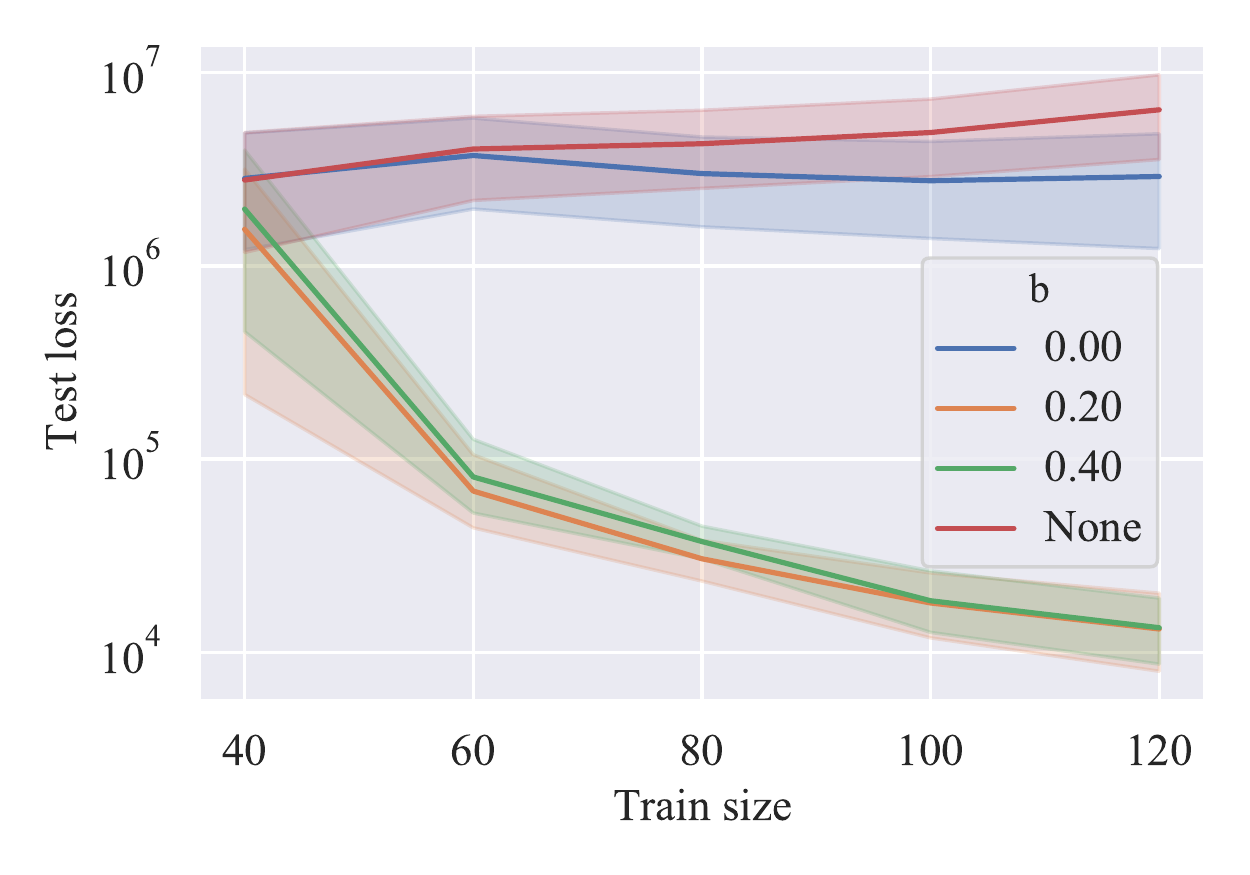|
\caption{Test loss values for different values of $a$, $b$ and $T$. Red line is the baseline where \textit{temporal momentum} is disabled. Values are reported with 95\% confidence intervals.}
\label{fig:ablation_momentum}
\end{figure*}

The analysis of our learning framework is carried out on the Italian\footnote{\url{https://github.com/pcm-dpc/COVID-19/tree/master/}} and French\footnote{\url{https://github.com/opencovid19-fr/data}} epidemiological data, gathered from official daily reports up to September 30, 2020.
This section is divided in two parts. First, we discuss the results of the ablation study, confirming  the importance of both the regularization term (Eq.~\eqref{eq:functional}) and the update rule (Eq.~\eqref{eq:reg}) for the learning process.
Then, we fit SIDARTHE on the Italian and French data. The code to reproduce all the experiments is available online\footnote{\url{https://github.com/sailab-code/learning-sidarthe}}.
The differential equations were solved by Heun's method \cite{quarteroni_numerical_2007}, implemented in PyTorch \cite{paszke2017automatic}. The automatic differentiation in PyTorch computes the gradient $\nabla f$ for each time-variant parameter.

\subsection{Ablation study}\label{subsec:ablation}

The learning of the SIDARTHE rates is performed via the update rule in Eq.~\eqref{eq:reg}, under the constraint on the first order derivative $\dot\cvar(t)$ introduced in Eq.~\eqref{eq:functional}. 
The impact of these two components (update rule and first order constraint) on the learning process depends on the values of the hyper-parameters $\{a,b\}$ in Eq.~\eqref{eq:update_rule_params} and $m$  in Eq.~\eqref{eq:functional}, respectively.
The aim of this section is to discuss and quantify the role of these hyper-parameters.

The Italian data set alone is considered in this section. Each experiment is repeated $20$ times, provided with a random initialization $x^0$ of the model parameters. Unless specified differently,
the training data set counts 120 \textit{consecutive} data points and the subsequent 20 samples are used for test. 

% \begin{figure}[!ht]
%     \centering
%     \includegraphics[width=0.45\textwidth]{TNNLS/momentum_exps/varying_a.pdf}
%     \includegraphics[width=0.45\textwidth]{TNNLS/momentum_exps/varying_m.pdf}
%     \caption{Validation loss values for different values of $(a,b)$. Red line is the baseline where \textit{momentum} is disabled. Values are reported with 95\% confidence intervals.}
%     \label{fig:momentum_exps}
% \end{figure}

\paragraph{Temporal momentum.}
%\draftAZ{We trained for each pair of $(a,b)$, $a\in [0.0, 0.20]$ and $b \in [0.0, 0.50]$, plus the case in which the temporal momentum is disabled, i.e. falling back to standard gradient descent. We report the validation loss as function of either $a$ or $b$. The results are presented in Fig.~\ref{fig:ablation_momentum}a and \ref{fig:ablation_momentum}b.}.
We performed a grid search on the hyper-parameter space of $\{a,b\}$. For $a$ we considered 5 equally spaced values in the interval $[0, 0.2]$. For $b$ we considered 6 equally spaced values in the interval $[0, 0.5]$. For each pair $\{a,b\}$ we trained 20 models. Additionally, we trained other 20 models where \textit{temporal momentum} (henceforth, \textit{momentum}) was disabled (i.e. $\omega_i=0$ for all $i$, thus reducing to a standard gradient descent). In total, we trained $5\times6\times20 + 20 = 620$ different models. The results are presented in Fig.~\ref{fig:ablation_momentum}.
The plots clearly show that the \textit{momentum} term improves the stability of the learning process. In particular, we see that the improvement saturates for $b > 0.1$. Conversely, $a > 0$ deteriorates the performances. 
%(TODO: add consideration of $a < 0$? experiments show that they are worse, and for high train size can lead to negative lr). 
Since the y axis is plotted with logarithmic scale, the confidence intervals are even narrower for $b > 0.1$ and $a = 0$.
\def\|#1|{\includegraphics[width=0.32\textwidth]{#1}}
\begin{figure*}
\centering
\|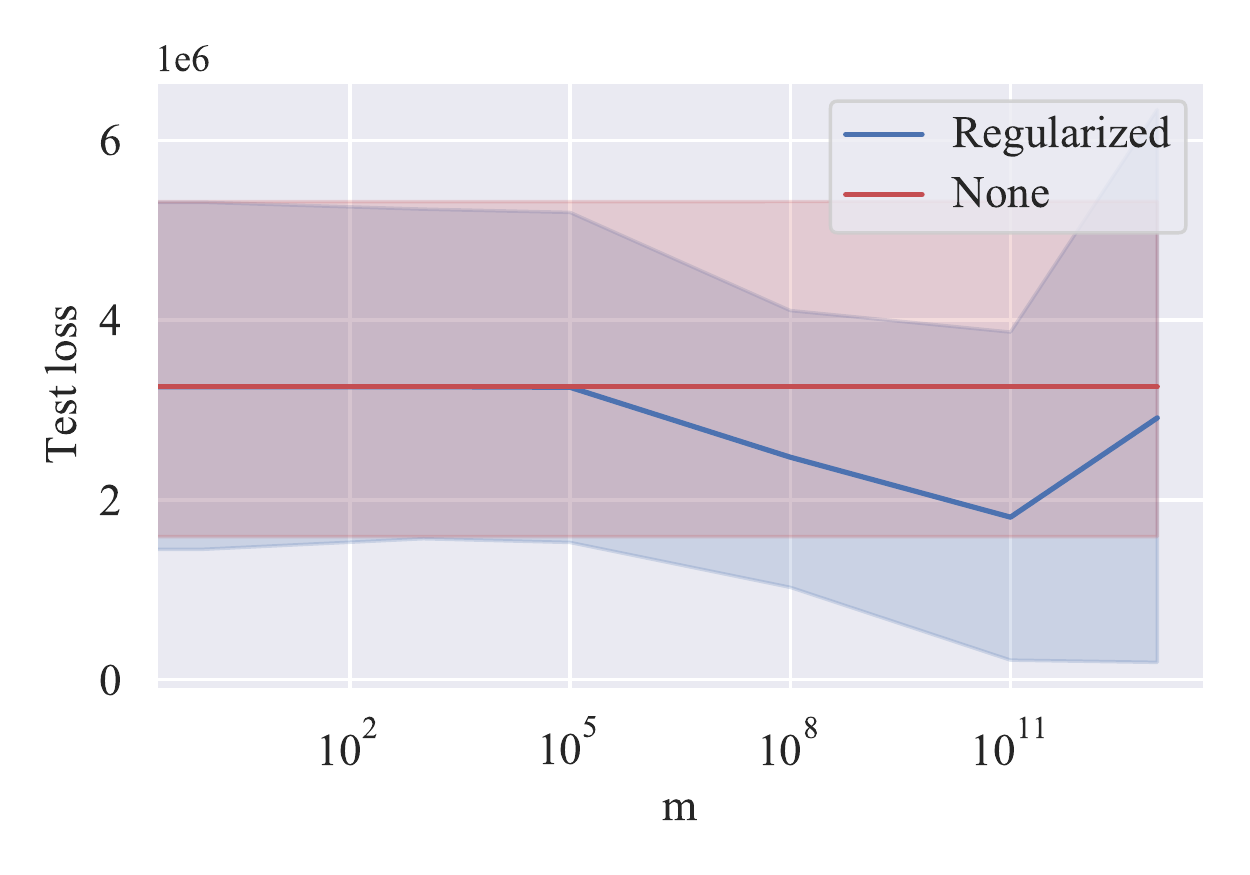|
\hfil
\|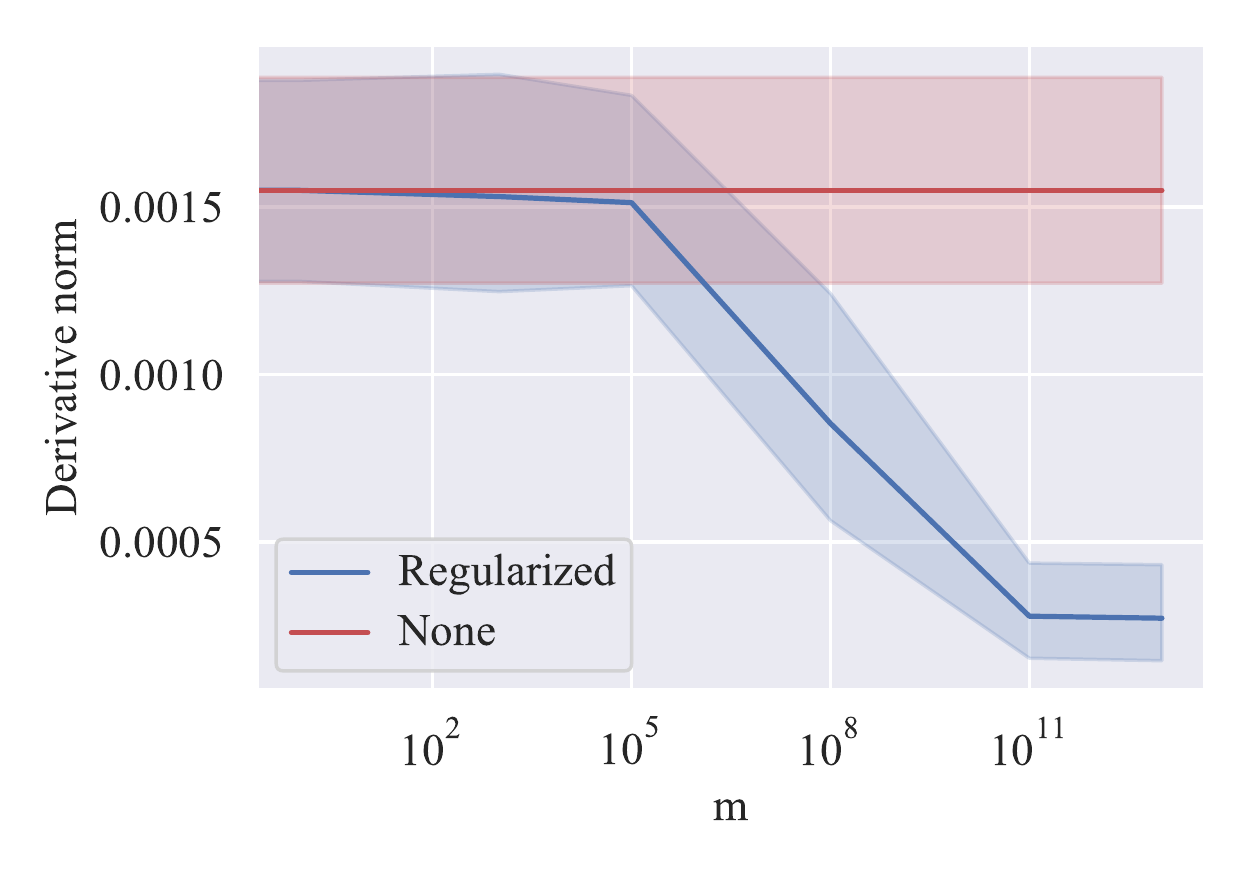|
\caption{Test loss values for different values of $a$, $b$ and $T$. Red line is the baseline where \textit{momentum} is disabled. Values are reported with 95\% confidence intervals.}
\label{fig:ablation_derivative}
\end{figure*}
% \begin{figure}[!ht]
%     \centering
%     \includegraphics[width=0.45\textwidth]{TNNLS/momentum_exps/varying_tsize.pdf}
%     \caption{Validation loss values for different values of $(T,b)$. Red line is the baseline where \textit{momentum} is disabled. Values are reported with 95\% confidence intervals.}
%     \label{fig:momentum_exps_tsize}
% \end{figure}
%Furthermore, we perform a model training for each pair $(T,b)$, where $T\in \{40, 60, 80, 100, 120\}$ is the size of the training set, and $b \in \{0.00, 0.2, 0.4\}$. For each value of $T$, we also train a model with the \textit{momentum} term disabled. 

We then performed a second grid search on the hyper-parameter space of $\{b,T\}$, where the values of $b$ are the same as described above, and we considered 5 equally spaced values of $T$ in $[40,120]$. In this case too, for each value of the pair $\{b,T\}$, 20 models were trained. In addition, for each value of $T$, we trained additional $
20$ models with \textit{momentum} disabled. In this setting, a total of $5\times6\times20 + 5\times20 = 700$ models were trained. Results are presented in Fig.~\autoref{fig:mmt_t}. The plot shows that when the \textit{momentum} term is disabled, the model performs poorly on test, and the learning has wider confidence intervals. Instead, when \textit{momentum} is enabled with a high enough value for $b$, the test loss becomes lower and the confidence intervals significantly narrow down.
These experiments show that the \textit{momentum} term dramatically improves the learning process, by further minimizing the (test) loss function and also reducing the dependency on the initial value $u_0$.  

% \begin{figure}[!ht]
%     \centering
%     \includegraphics[width=0.45\textwidth]{TNNLS/der_exps/val_loss.pdf}
%     \includegraphics[width=0.45\textwidth]{TNNLS/der_exps/der_loss.pdf}
%     \caption{Validation loss and derivative values for different values of $m$. Red line is the baseline where \textit{derivative loss} is disabled. Values are reported with 95\% confidence intervals.}
%     \label{fig:derivative_exps}
% \end{figure}

\paragraph{Regularization.} To evaluate the effectiveness of the \textit{derivative} term, we performed a grid search on the hyper-parameter space of $m\in\{0, 1., 10^3, 10^5, 10^8, 10^{11}, 10^{13}\}$. For each value of $m$ we trained 20 models, for a total of $7\times20 = 140$ trained models. We plot the test loss as function of the weight, as shown in Fig.~\autoref{fig:ablation_derivative}.
The results show that, except for $m = 10^{11}$, the derivative term is not significantly changing the test loss. Instead, we see that the norm of the derivative of the parameters steadily decreases for $m > 10^5$. This means that the derivative term contributes to enforcing parameters as smoother functions of time, without significantly degrading the generalization of the learning.

\subsection{Outbreak Forecasting}
% We forecast epidemic evolution in Italy and France. 
% and then we trained our model in the temporal windows from February 24 to April 9. 

% The fitting was performed on the same epidemiological data, i.e. Threatened ($\hat\t$), Recognized ($\hat\r$), Diagnosed ($\hat\d$) individuals and diagnosed individuals who recovered ($\hat\h$). 
%As explained in Section~\ref{sec:estimation_of_sir_coefficients}, we do not include %the data on deceased individuals among the fit targets. 

% We considered the period April 10 -- April 29 for validation 
% and April 30 -- May 15 for test.

%Official data was collected from the Italian Civil Protection\footnote{Italian Civil Protection on GitHub repository: \url{https://github.com/pcm-dpc/COVID-19/tree/master/dati-regioni}.}. 

% TODO: PLACE FIGURES CORRECTLY
% \def\|#1|{\includegraphics[width=0.45\textwidth]{IT/#1}}
% \begin{figure*}[!hb]
% \hbox to\hsize{\|s.pdf|\hfil\|i.pdf|}
% \hbox to\hsize{\|d.pdf|\hfil\|a.pdf|}
% \hbox to\hsize{\|r.pdf|\hfil\|t.pdf|}
% \hbox to\hsize{\|h_detected.pdf|\hfil\|e.pdf|}
% \caption{Epidemic evolution of COVID-19 in Italy.}
% \label{fig:sidarthe_fit}
% \end{figure*}
%
We forecast the epidemic spreading in Italy and France. 
We trained our models in the time span going from February, the 24th, to August, the 30th, i.e. overall 188 days. 
The following 31 days were used for validation and test.
In particular, we considered the period August, the 31st up to September, the 6th, for validation (7 days) and September, the 7th, up to September, the 30th, for test (25 days). 
The fitting was performed on the time series appearing in the functional risk $F$ in Eq.\eqref{eq:functional}. i.e. $\hat\d,\hat\r,\hat\t,\hat\h,\hat\e$. These values are all available from the Italian reports, whereas in the French official data, only $\hat\r,\hat\t,\hat\e$ are explicitly observed, along with the \textit{cumulative number of infectious} and the number of \textit{hospitalized individuals that recovered}, defined here as $C_I(t)$ and $\h_h(t)$, respectively. Instead, hospitalized infected individuals corresponding to $\d$ (i.e. the proxy of the asymptomatic detected people) are not traced. Consequently, we did not have direct information about their number and recovery date.  To extract $\hat\d$ and $\hat\h$ we made the assumption that asymptomatic individuals heal after a period $d$, that was set to $14$ days, i.e. the quarantine period commonly established by national governments. In such a way, active asymptomatic infectious $\hat\d$ and recovered individuals $\hat\h$ were estimated (at time $t$) as follows:
\[
\begin{aligned}
&\hat\d(t) = C_I(t) - \hat\t(t) - \hat\r(t) - \hat\d(t-d)\\
&\hat\h(t) = \h_h(t) + \hat\d(t-d)
\end{aligned}
\]
Moreover, some daily French reports have partial or total missing information, causing the presence of many missing data. Due to the rich presence of noise and missing data in the early stages of French outbreak, the model fitting begins from March, the 17th instead of February, the 24th, while validation and test dates were left unchanged. The remaining missing targets within training/validation/test periods were simply ignored for learning and evaluation.

\def\|#1|{\includegraphics[width=0.33\textwidth]{#1}}
\begin{figure*}[!hb]
\hbox to\hsize{\|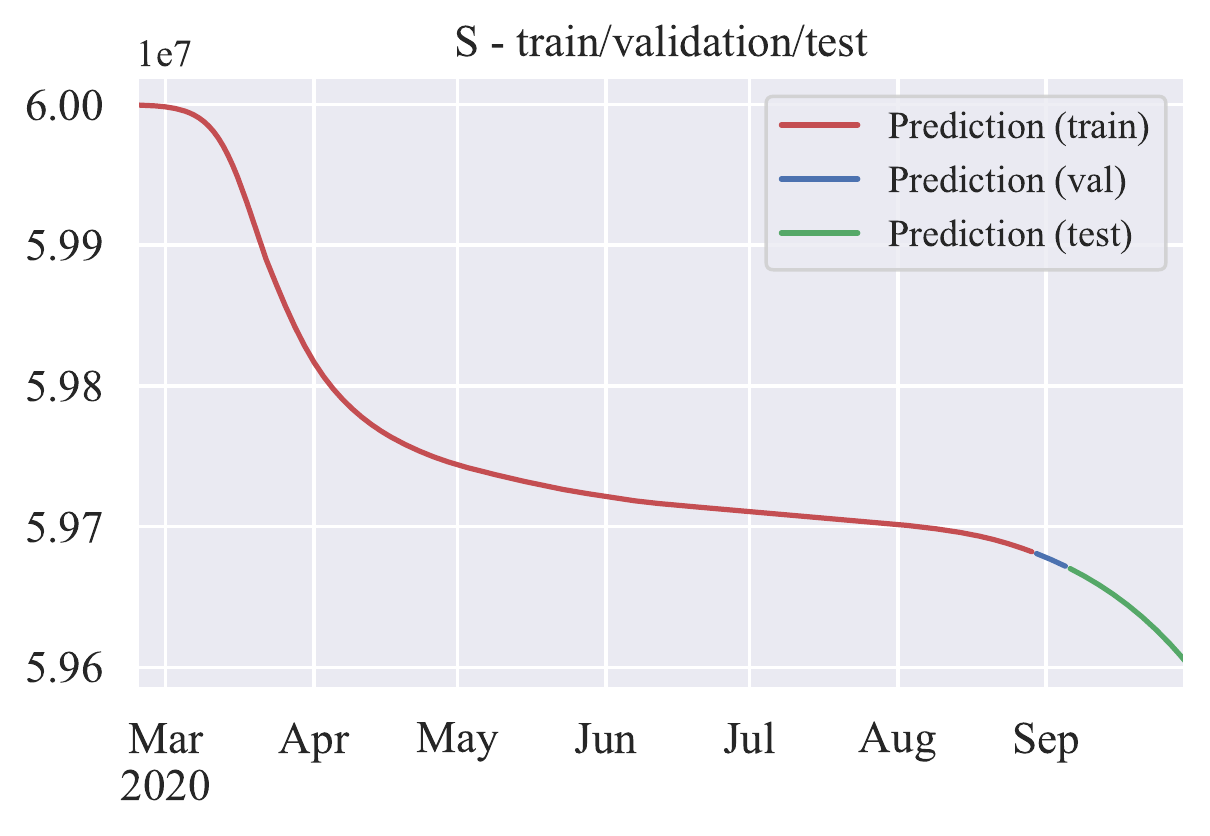|\hfil\|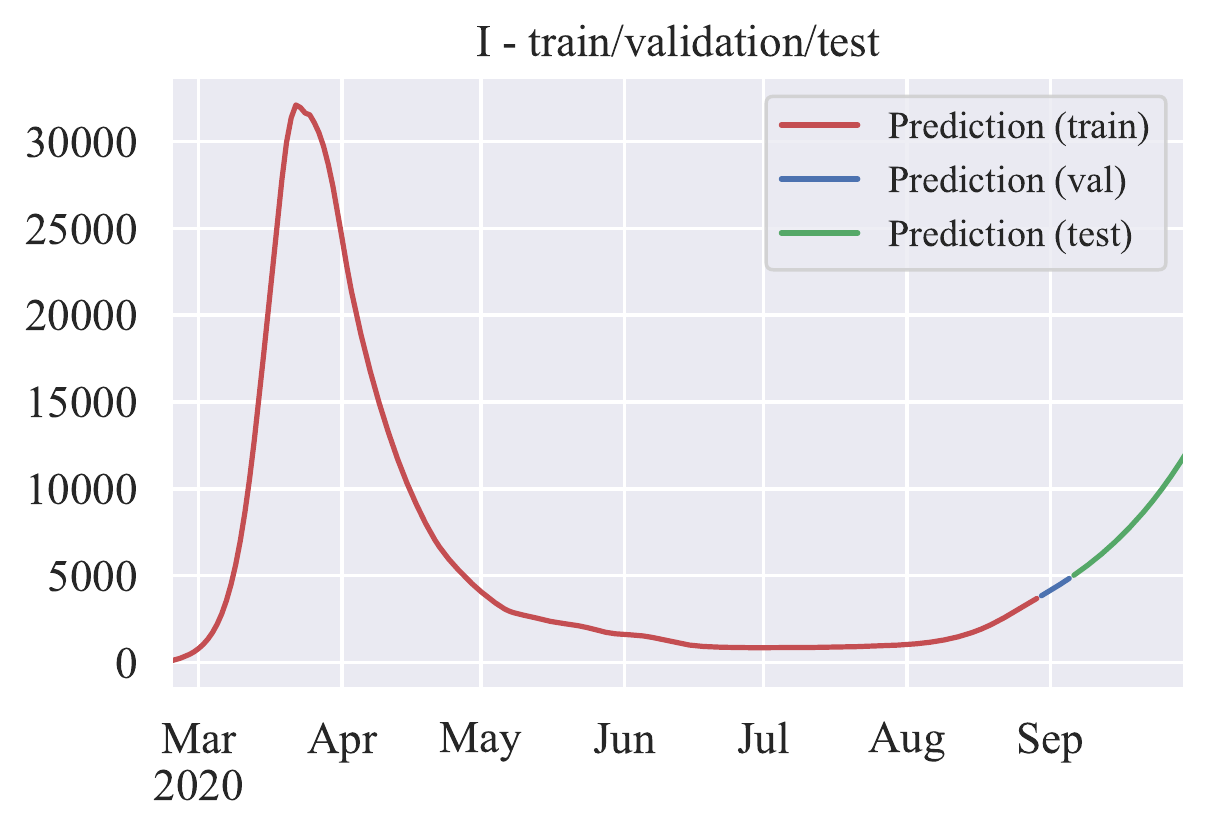|\hfil\|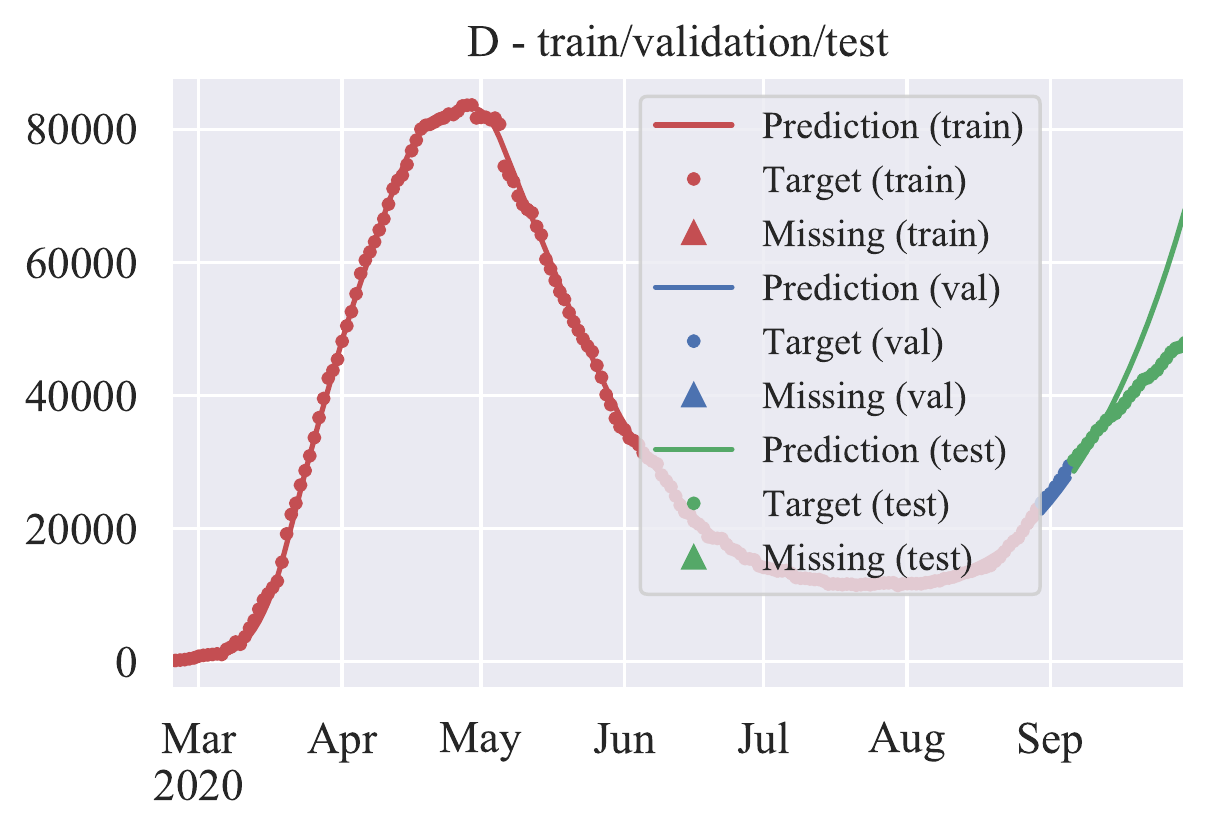|}
\hbox to\hsize{\|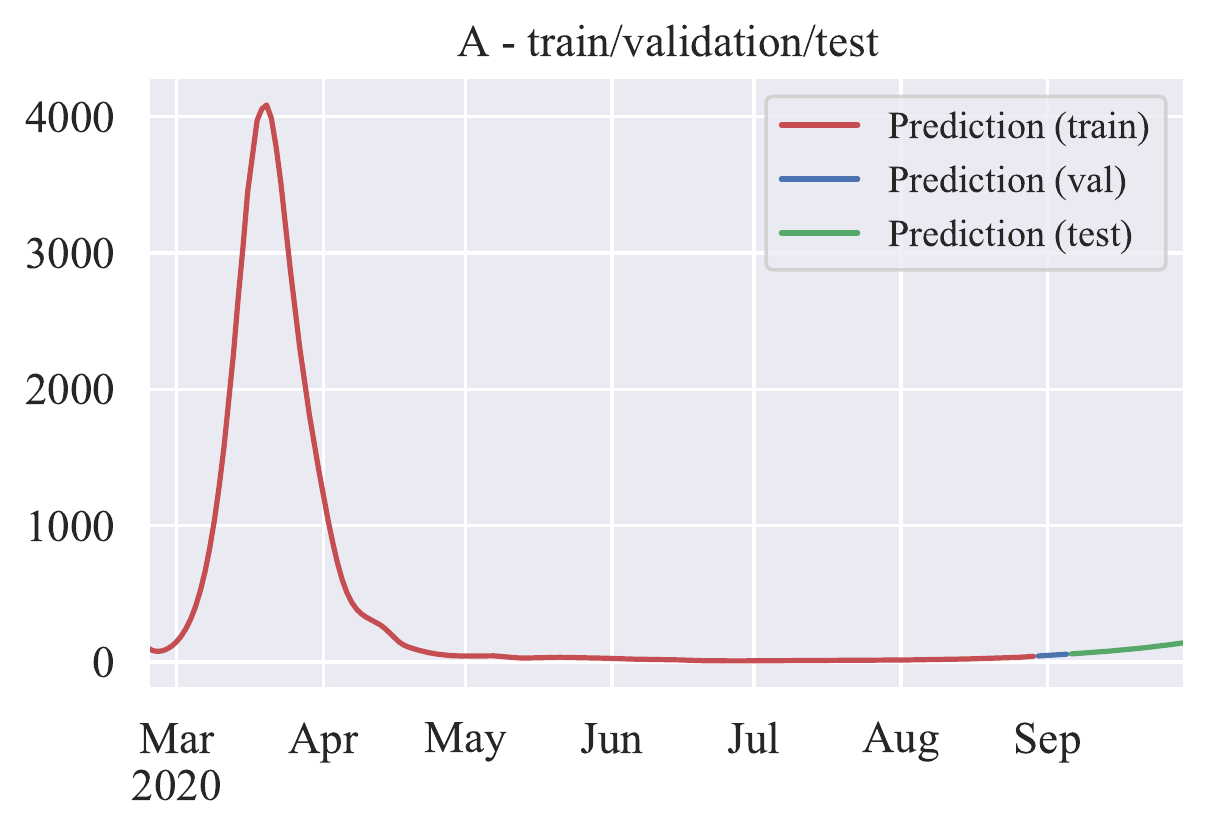|\hfil\|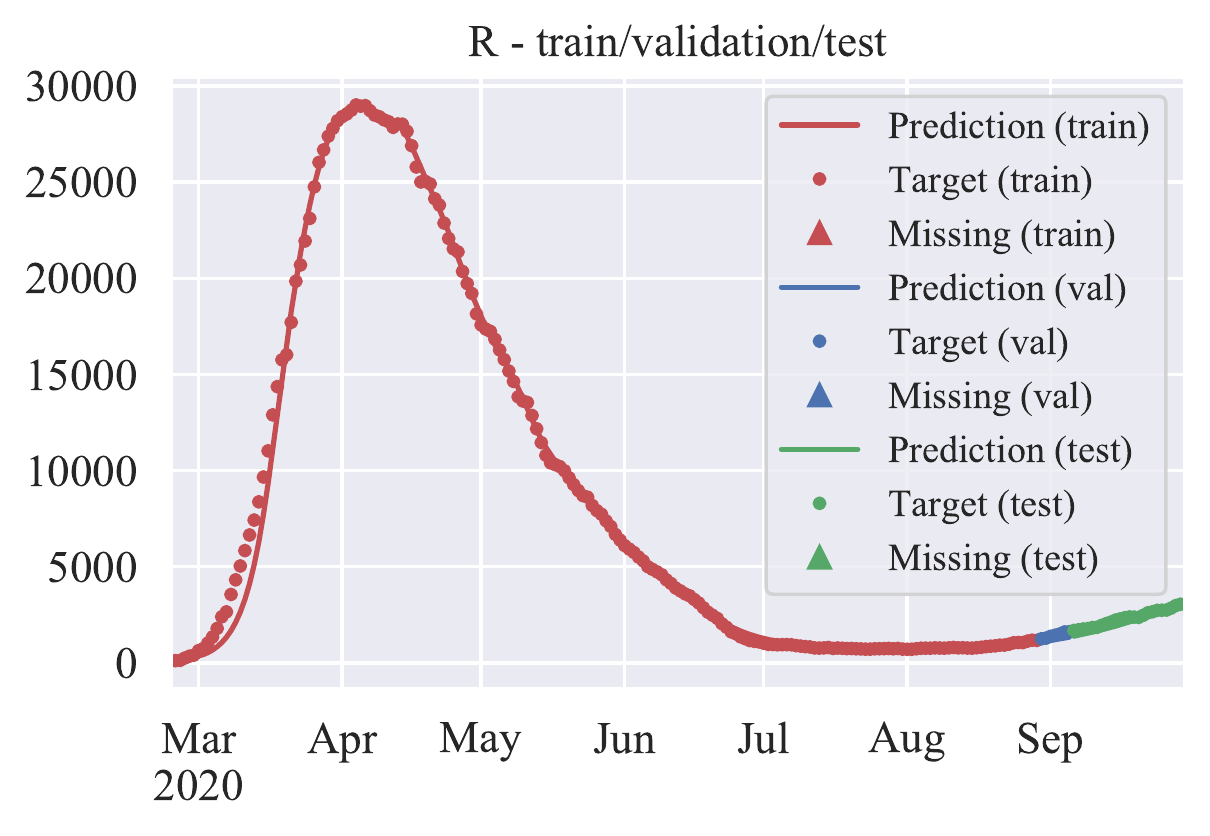|\hfil\|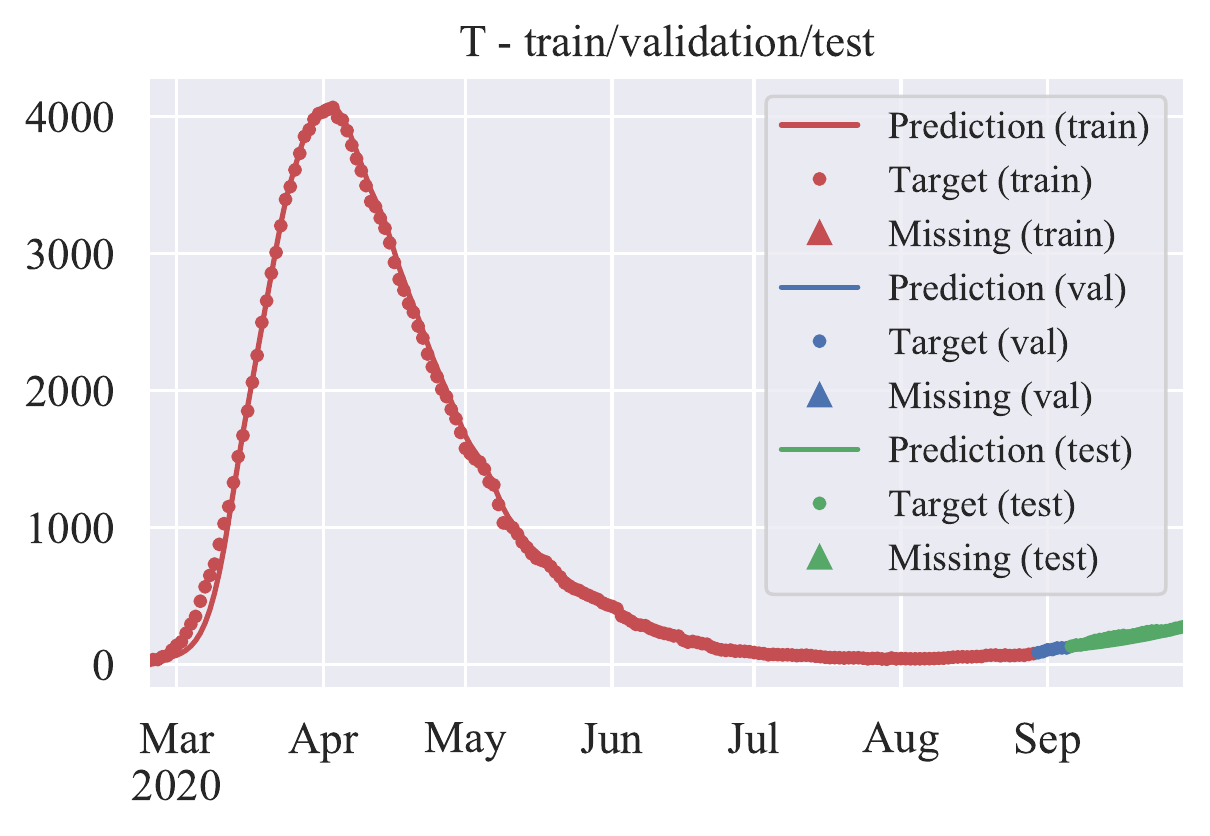|}
\hbox to\hsize{\|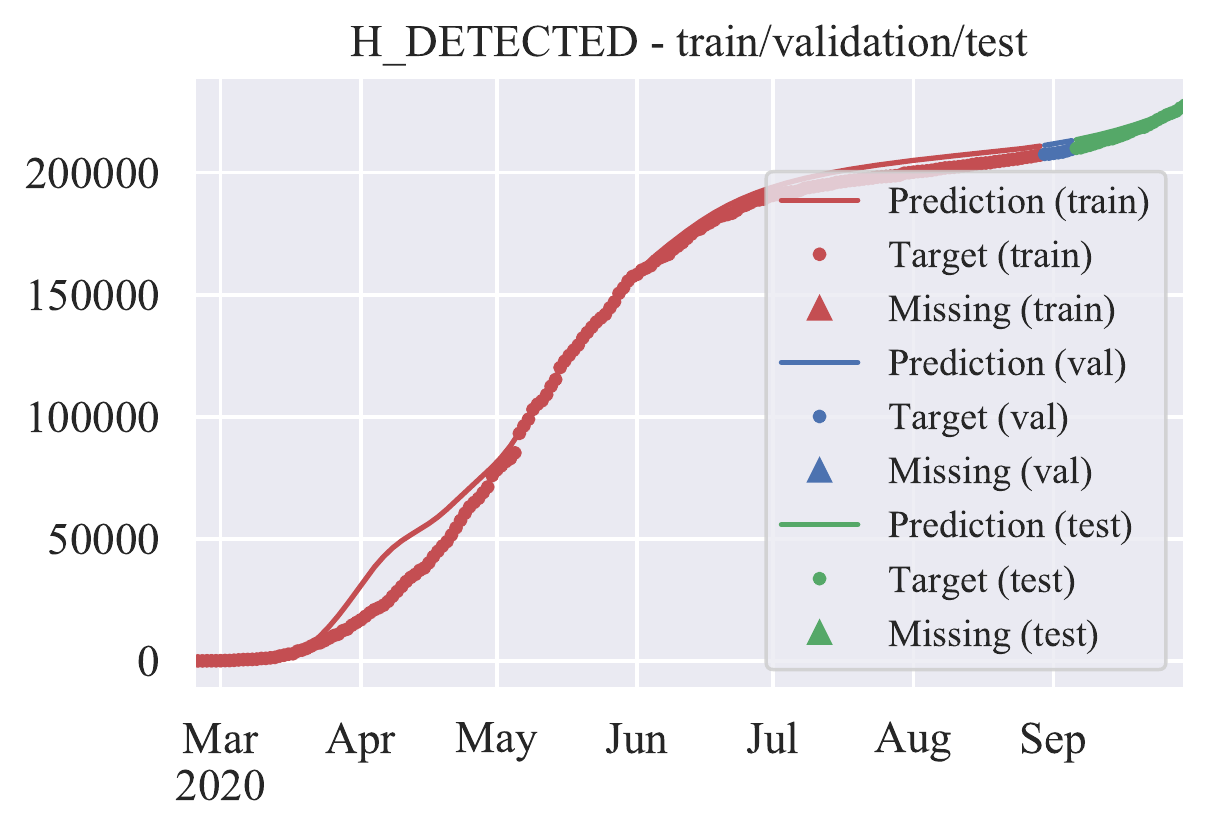|\hfil\|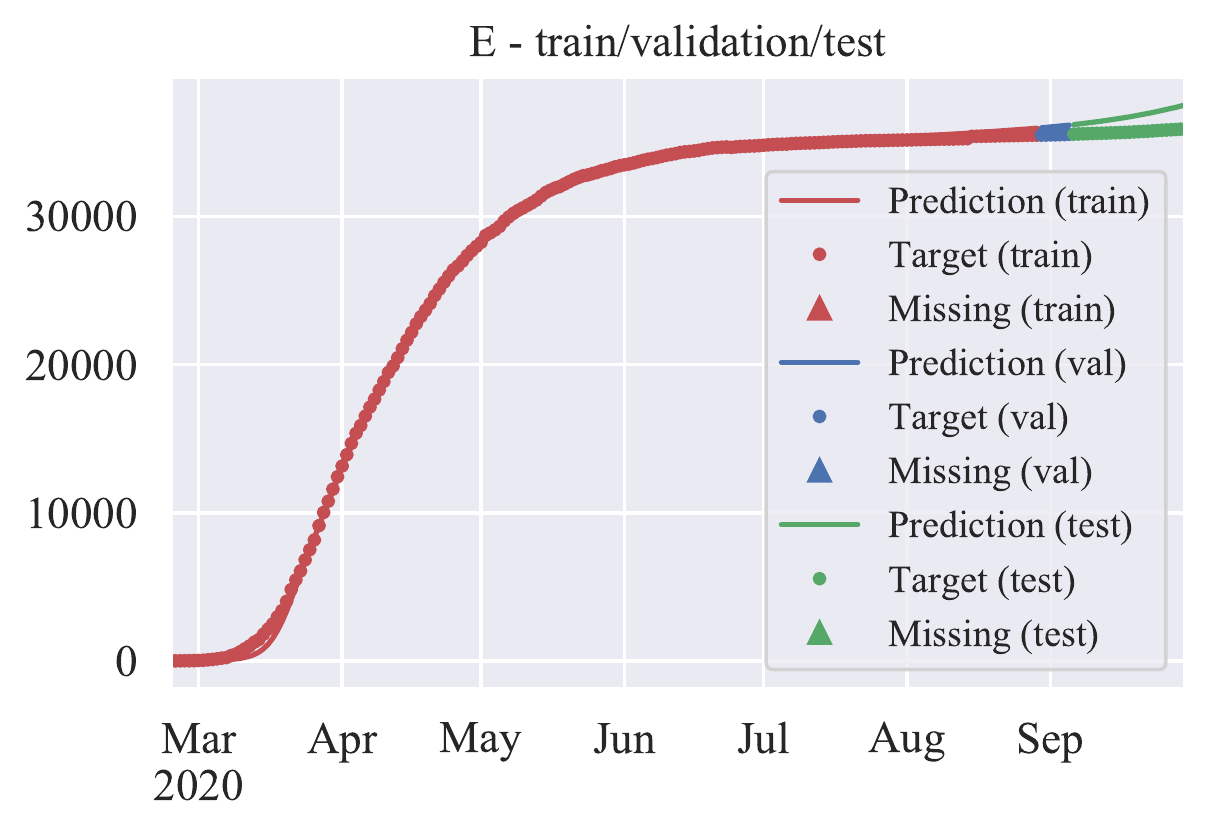|\hfil\|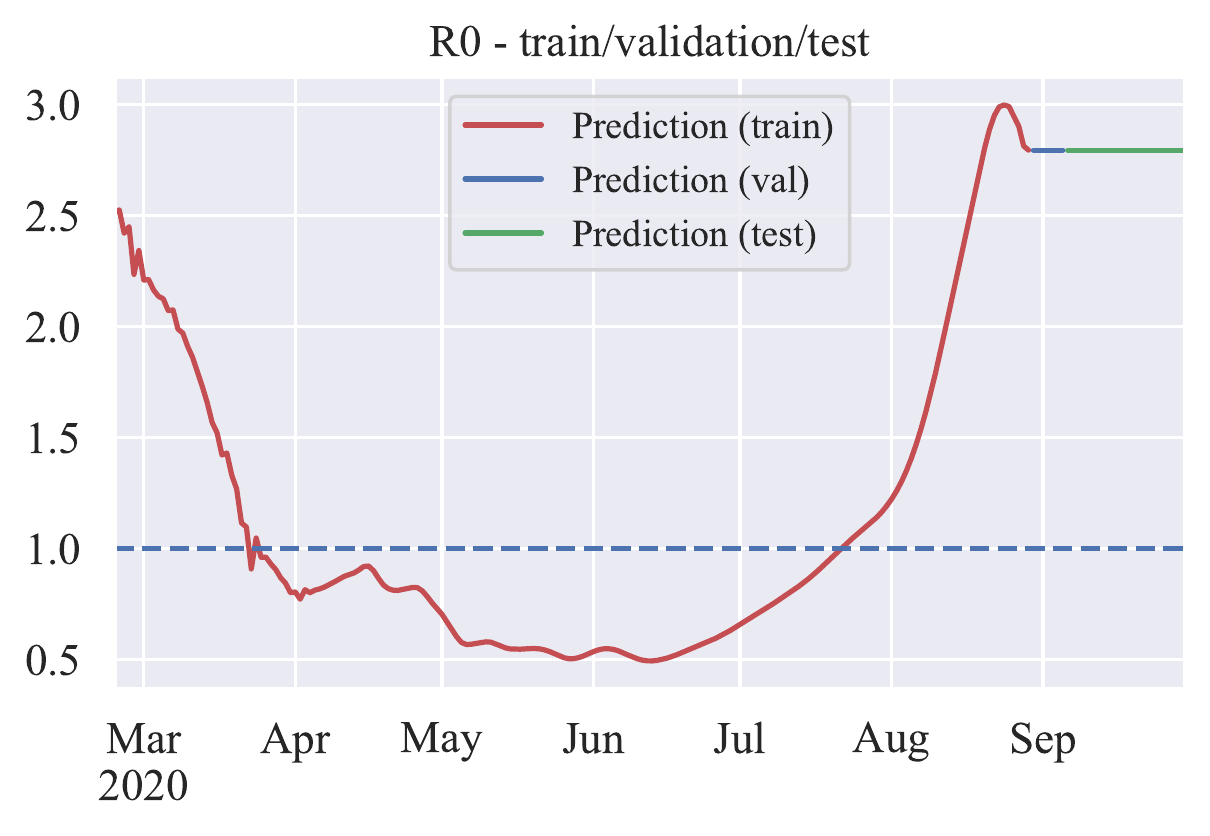|}
\caption{Epidemic evolution of COVID-19 in Italy.}
\label{fig:sidarthe_fit_it}
\end{figure*}

\def\|#1|{\includegraphics[width=0.33\textwidth]{#1}}
\begin{figure*}[!ht]
\hbox to\hsize{\|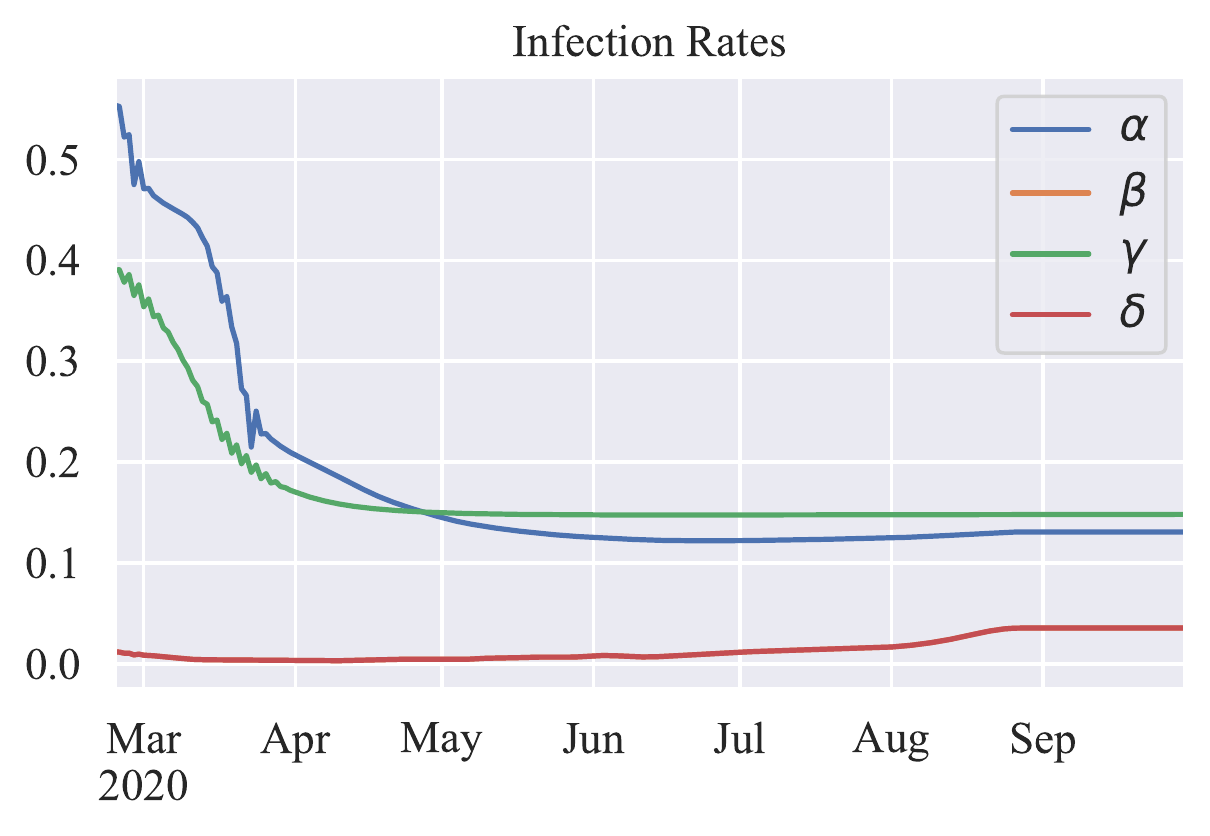|\hfil\|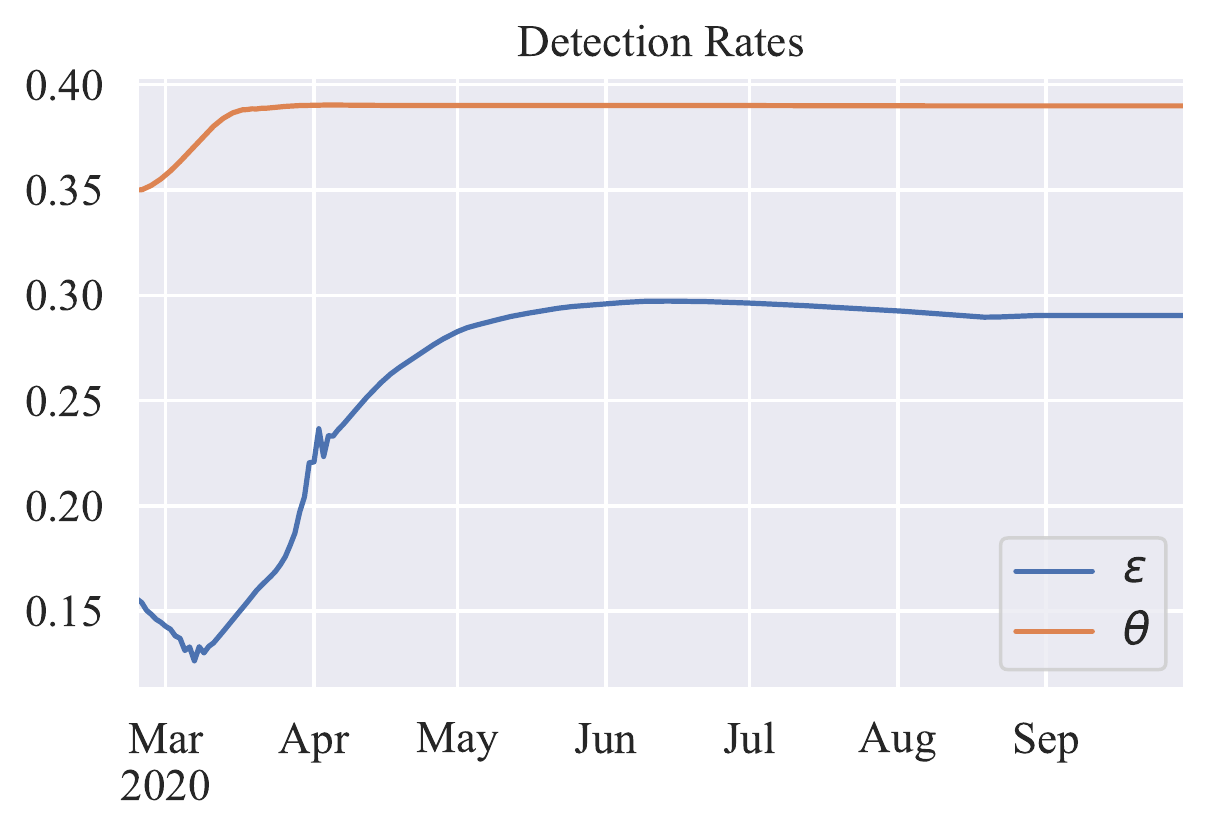|\hfil\|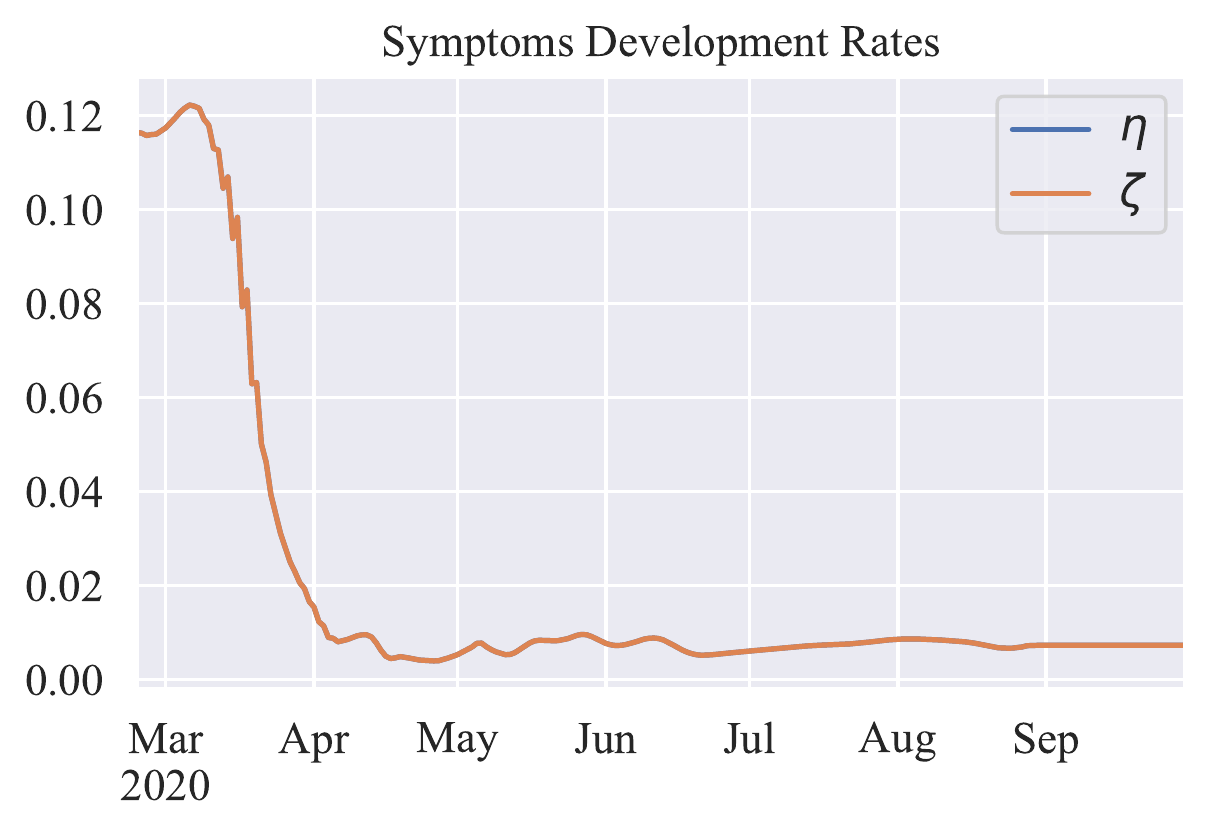|}
\hbox to\hsize{\|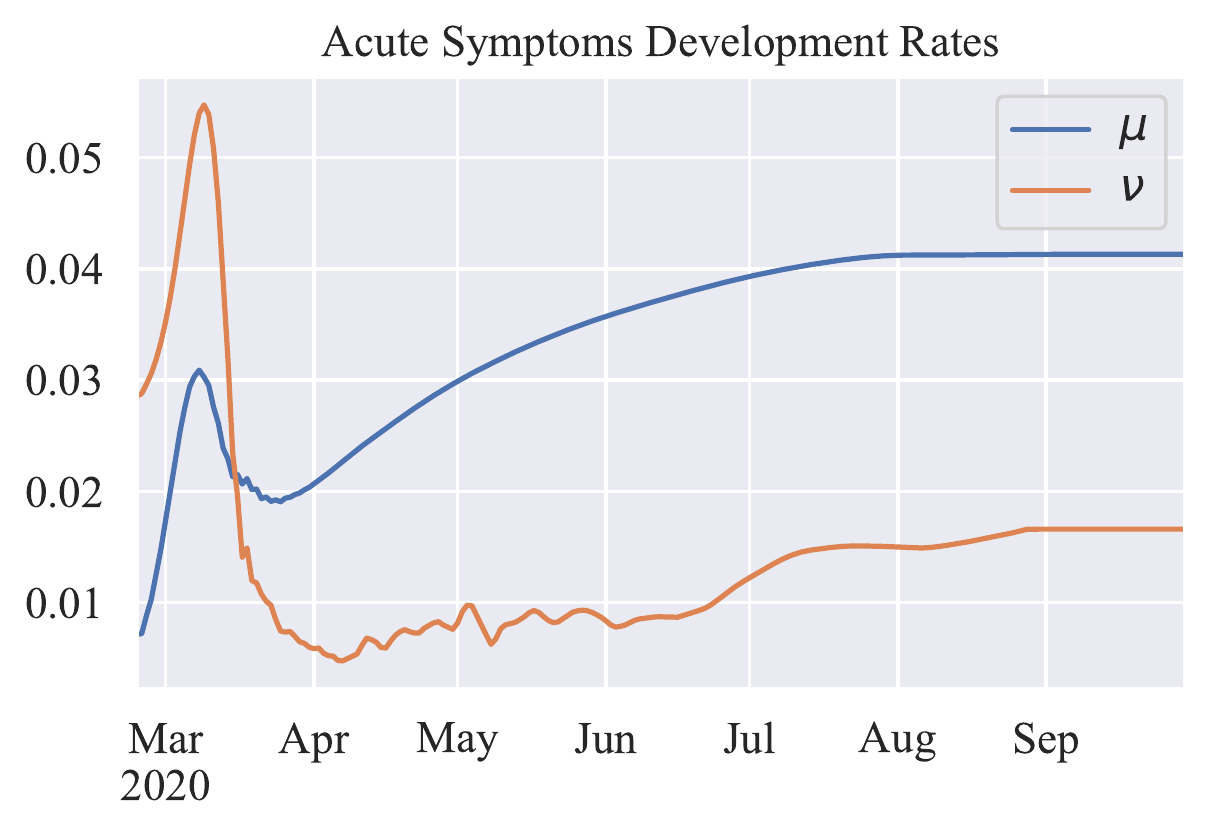|\hfil\|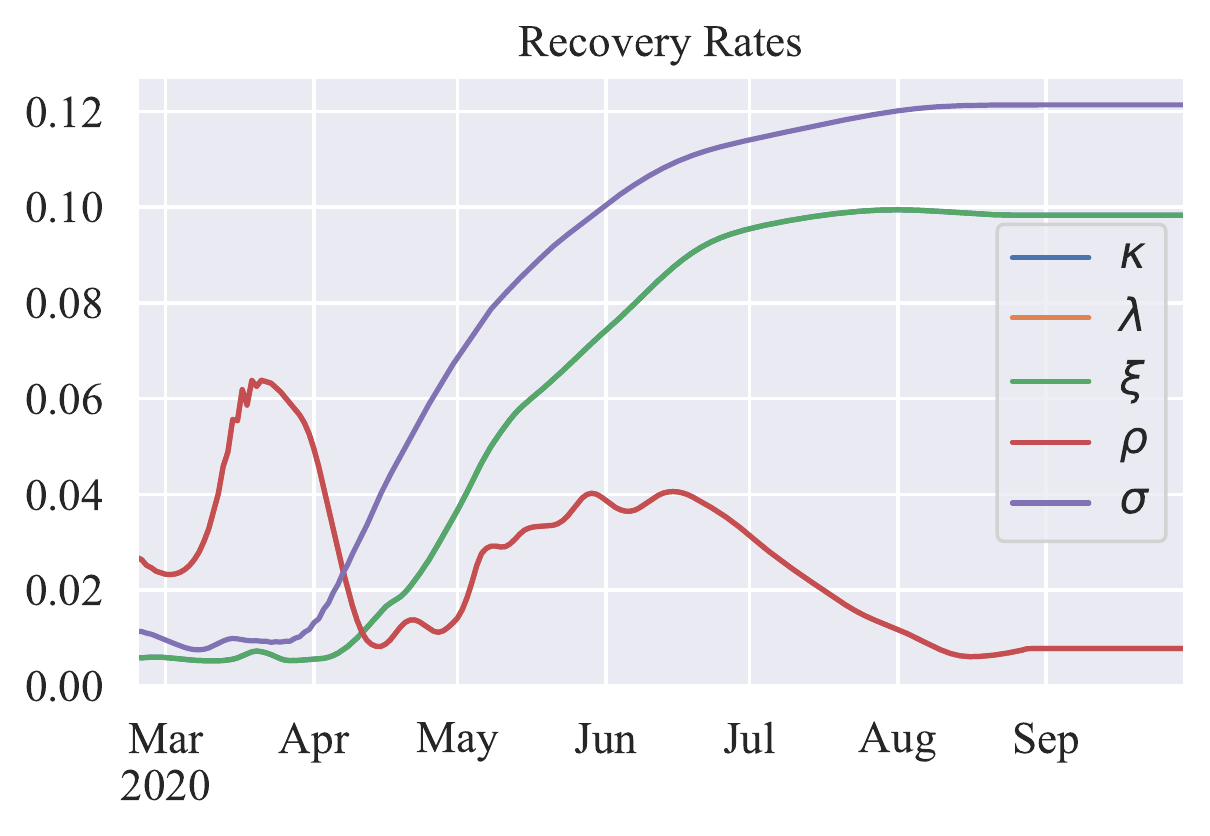|\hfil\|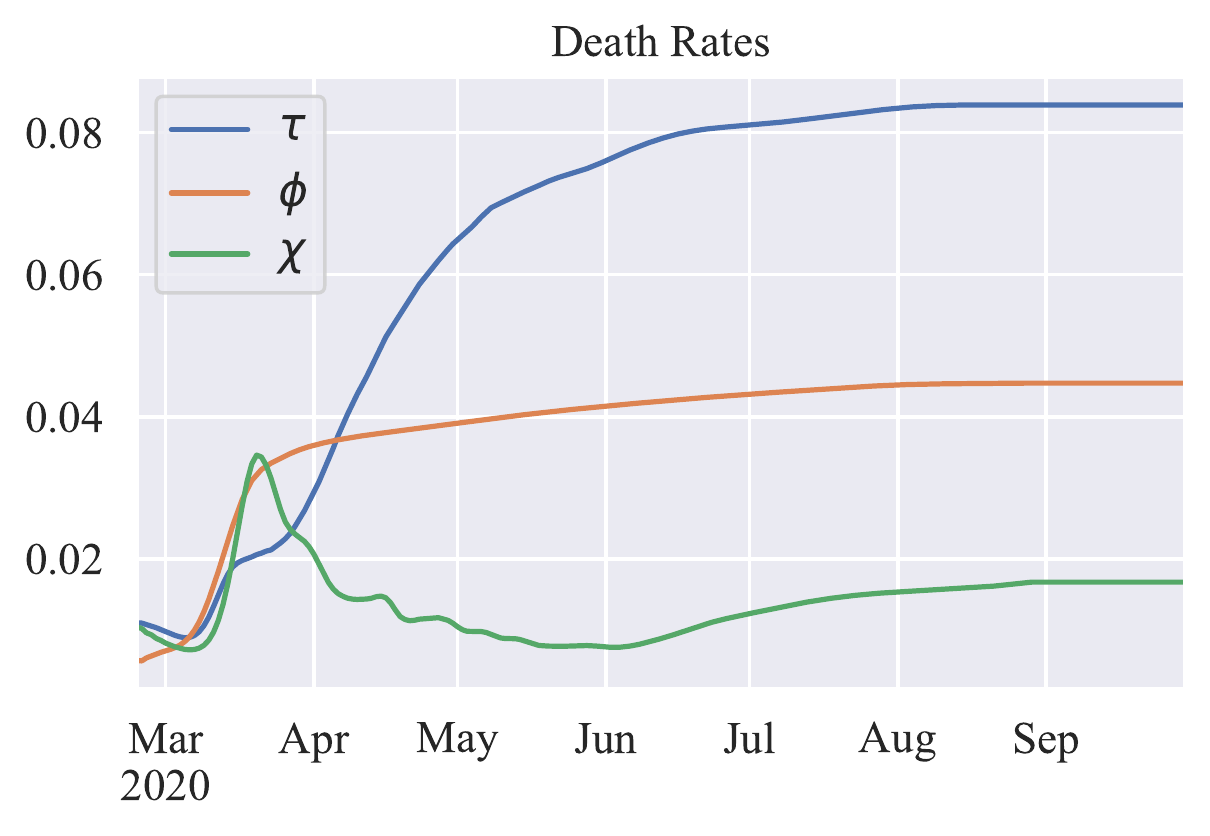|}
\caption{Time-variant parameters dynamics in Italy.}
\label{fig:sidarthe_params_it}
\end{figure*}

\def\|#1|{\includegraphics[width=0.33\textwidth]{#1}}
\begin{figure*}[!hb]
\hbox to\hsize{\|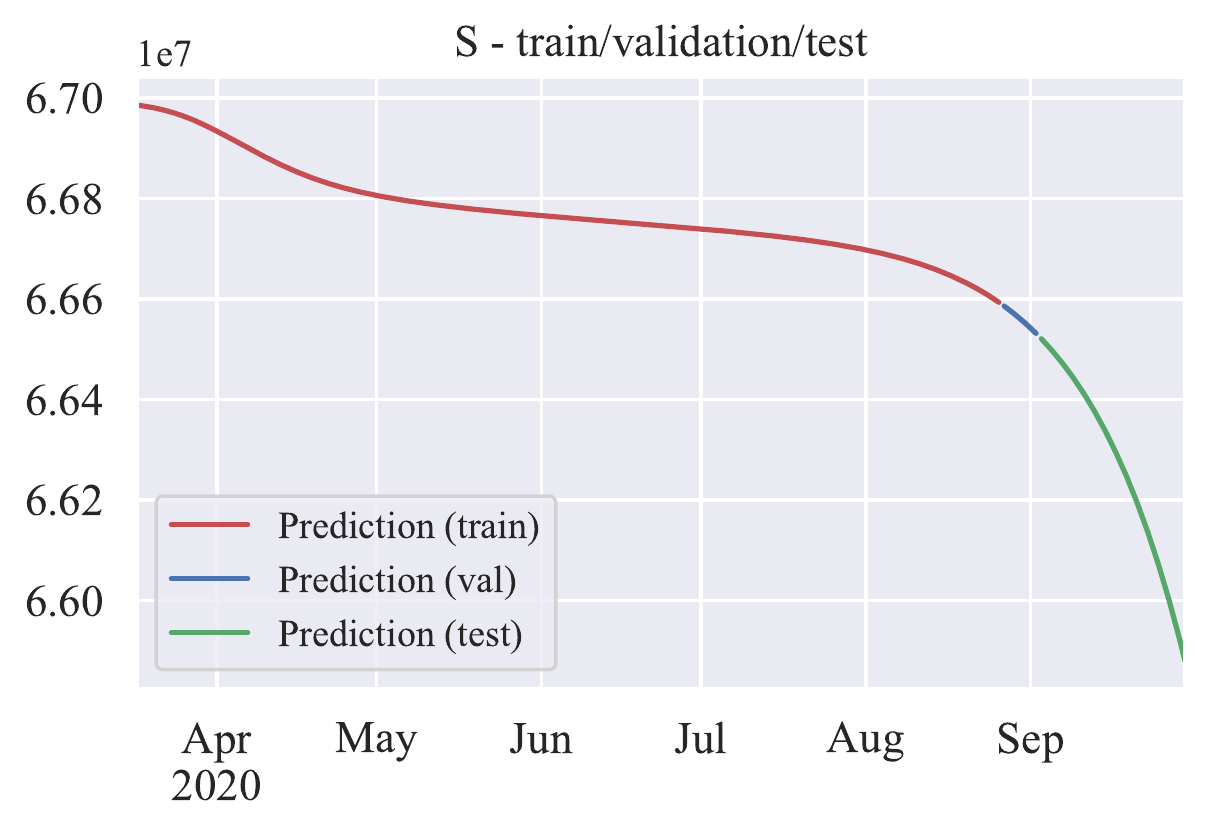|\hfil\|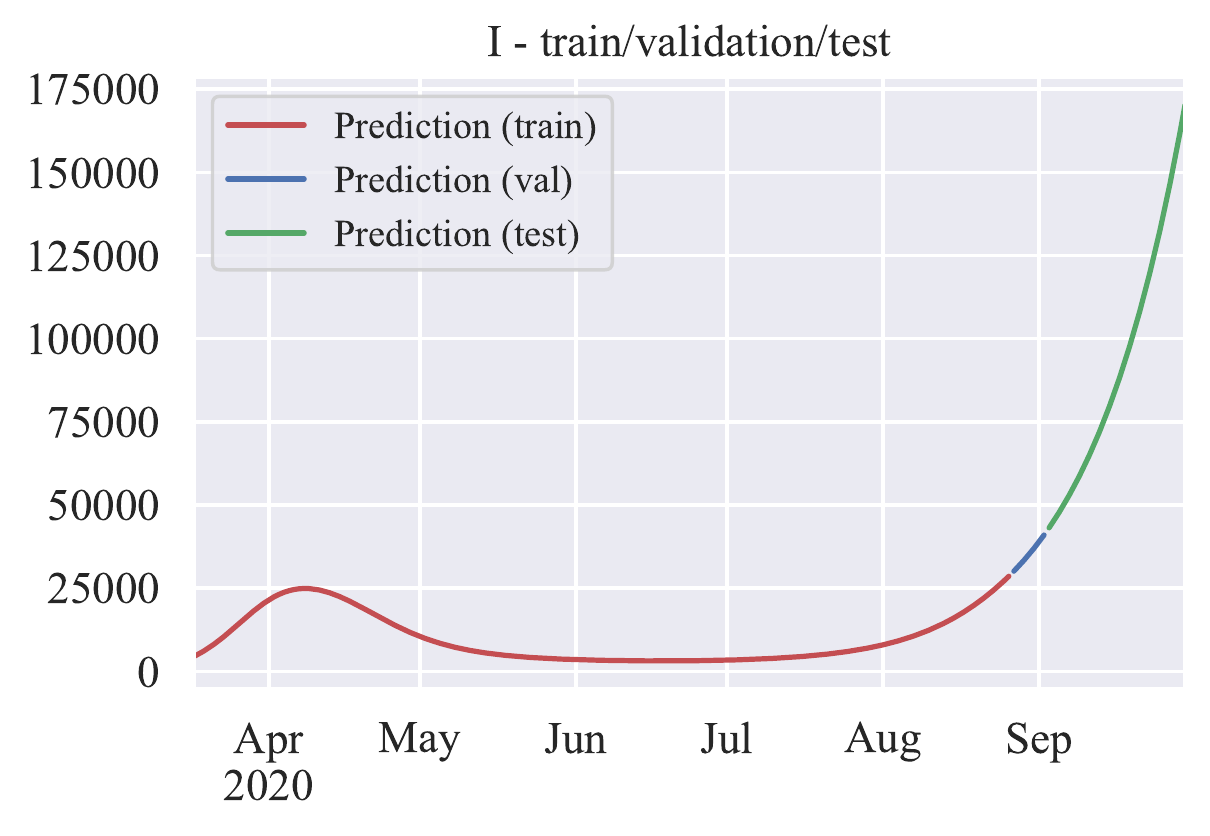|\hfil\|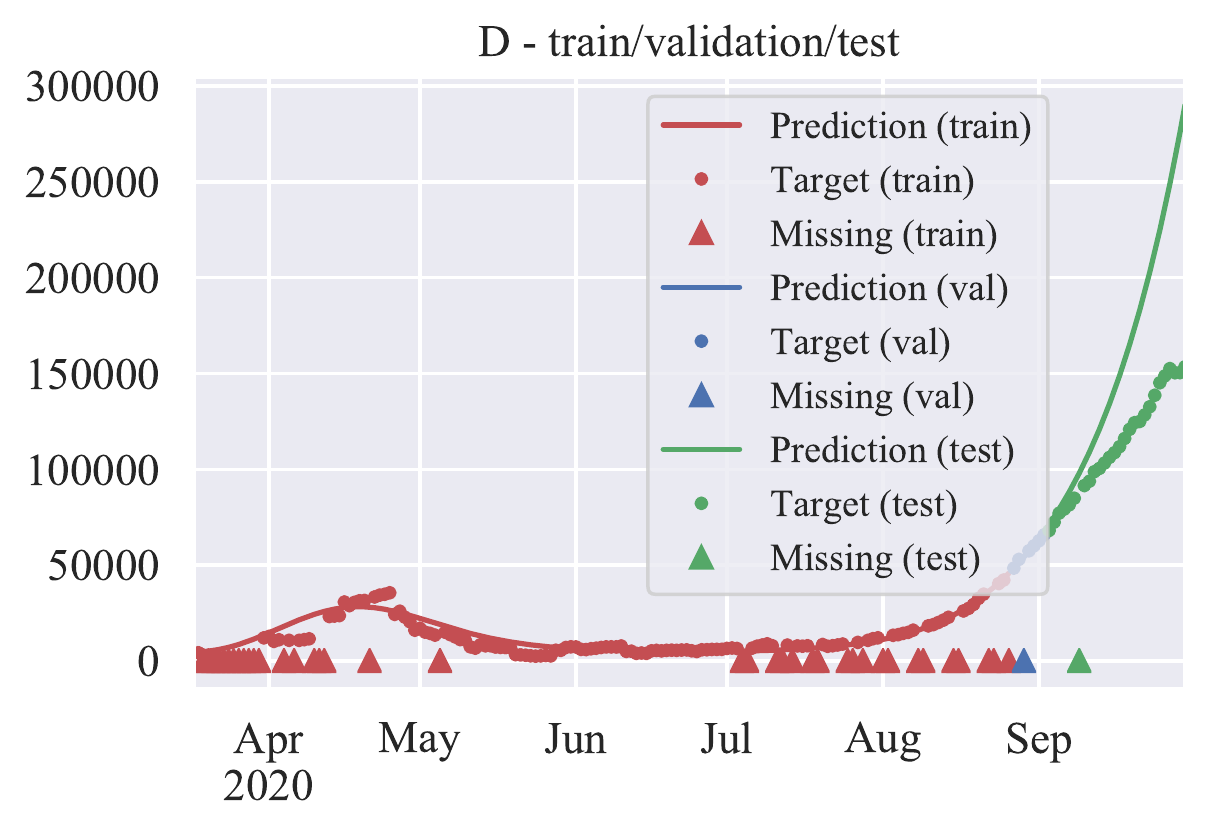|}
\hbox to\hsize{\|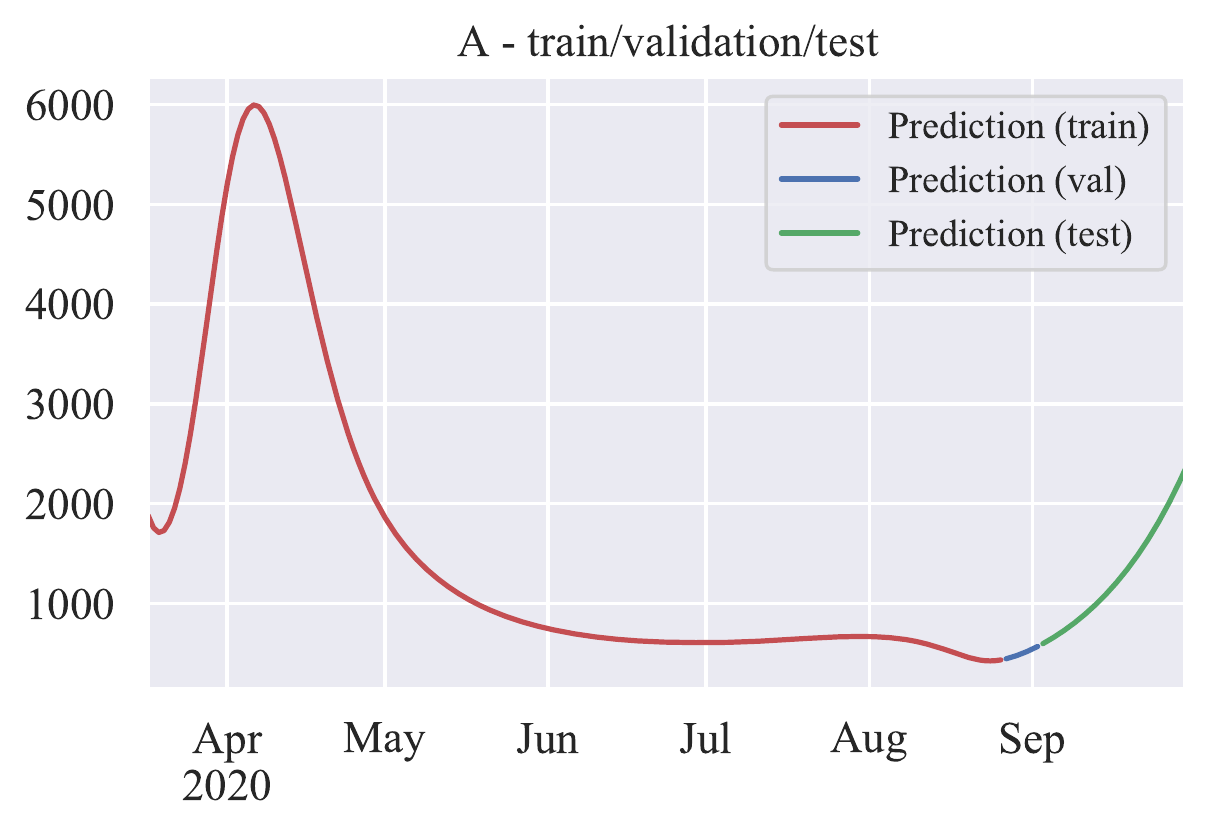|\hfil\|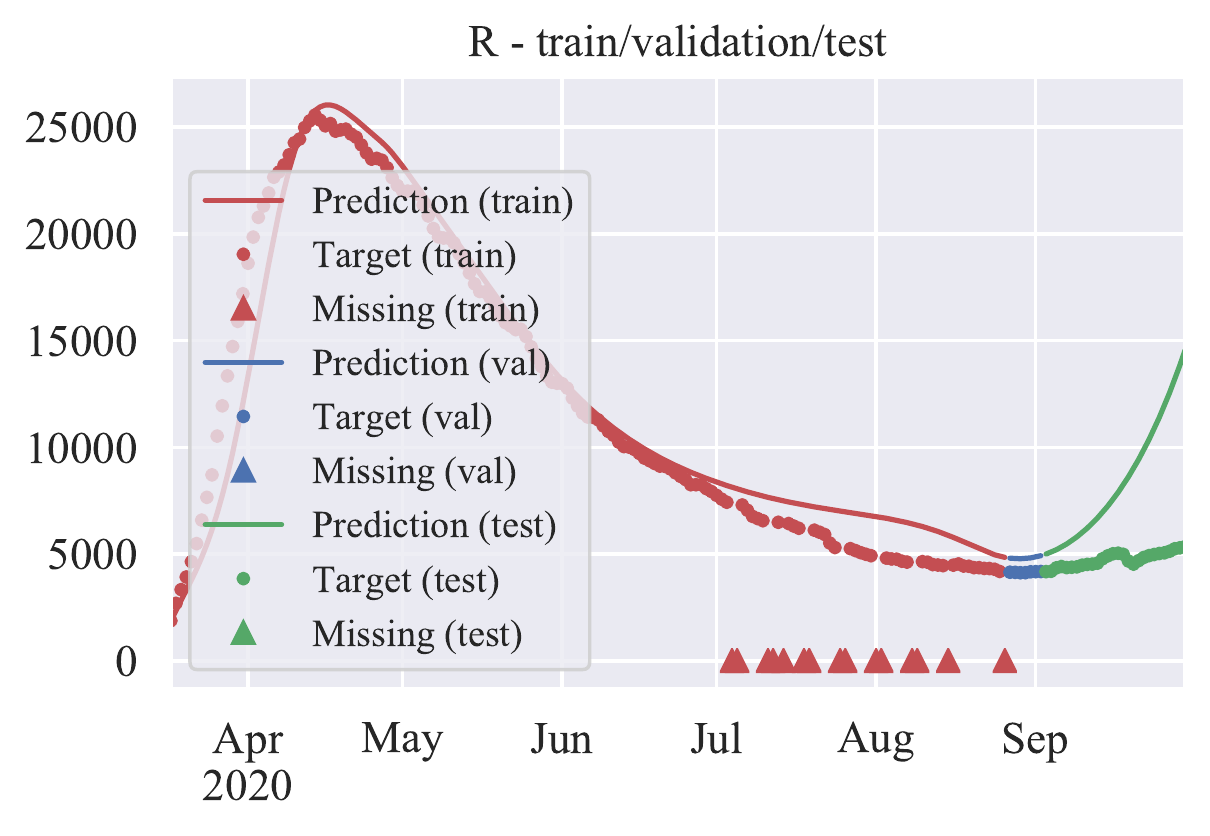|\hfil\|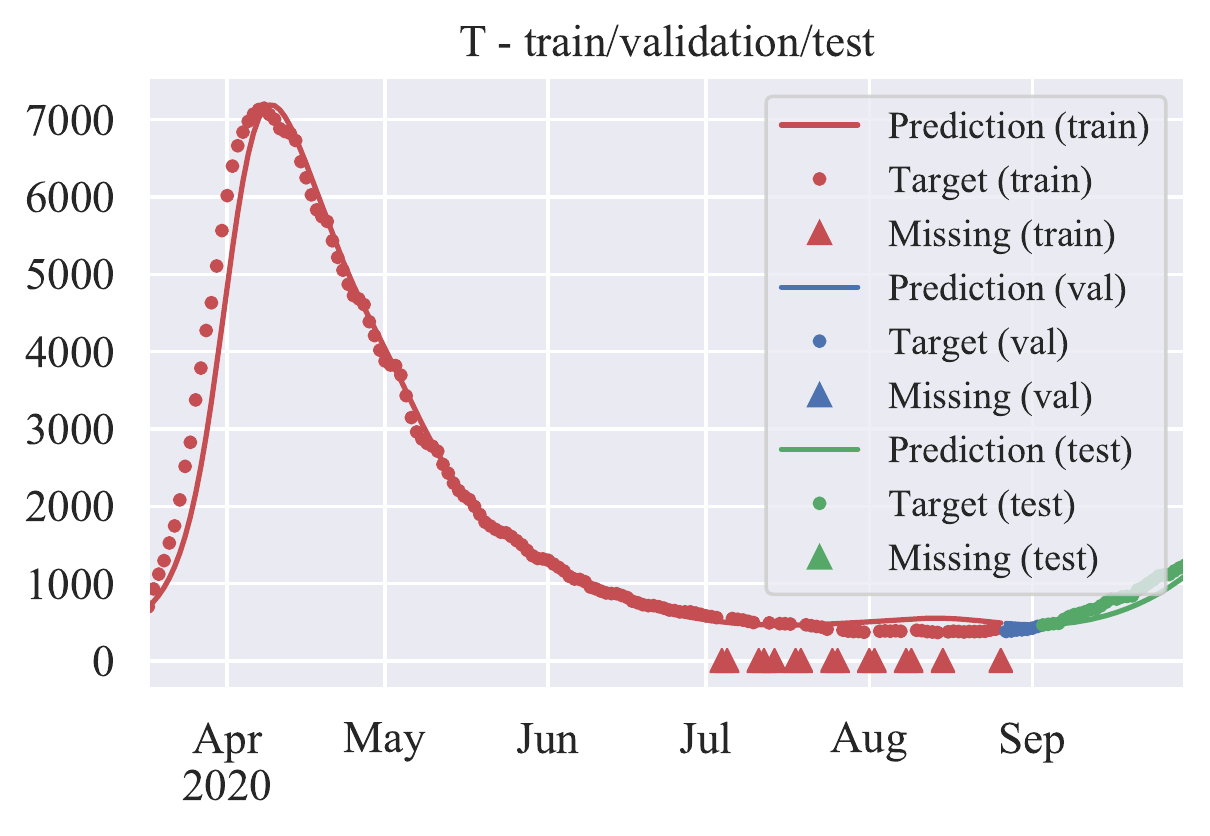|}
\hbox to\hsize{\|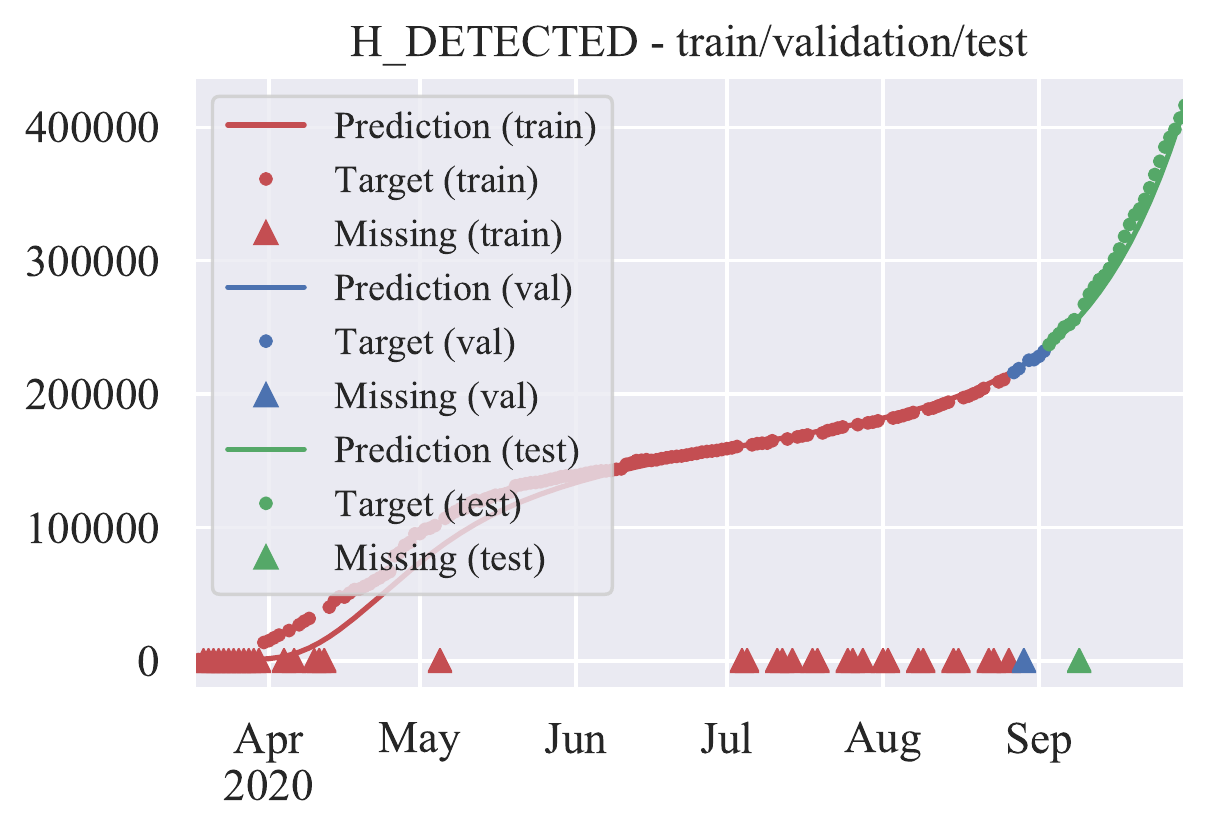|\hfil\|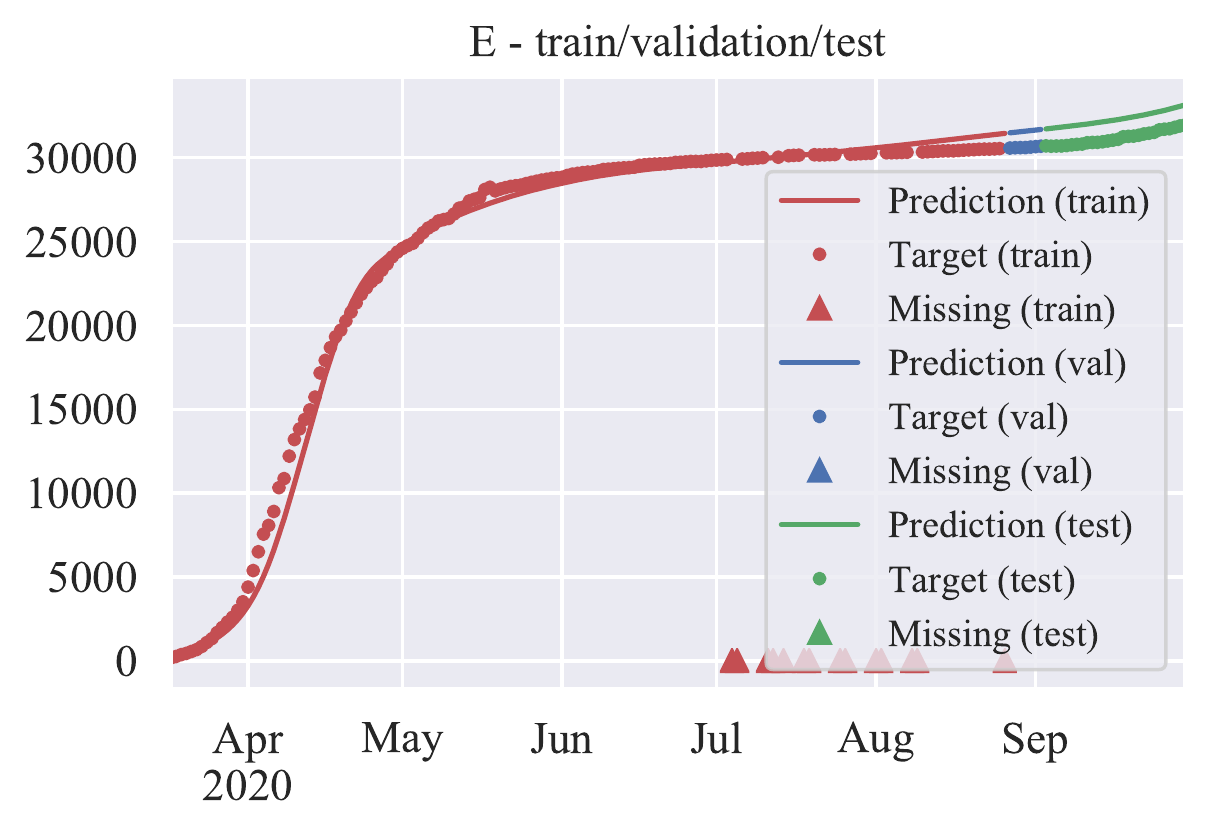|\hfil\|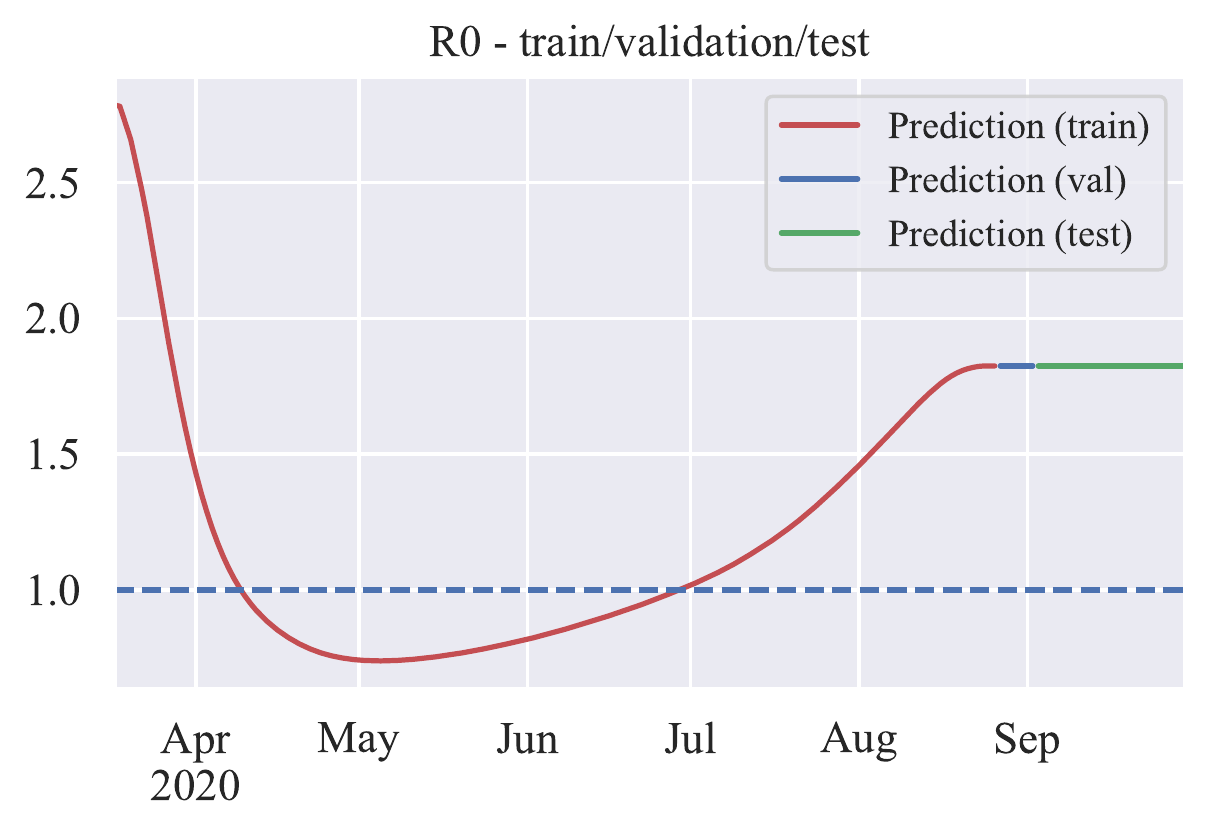|}
\caption{Epidemic evolution of COVID-19 in France.}
\label{fig:sidarthe_fit_fr}
\end{figure*}

\def\|#1|{\includegraphics[width=0.33\textwidth]{#1}}
\begin{figure*}[!ht]
\hbox to\hsize{\|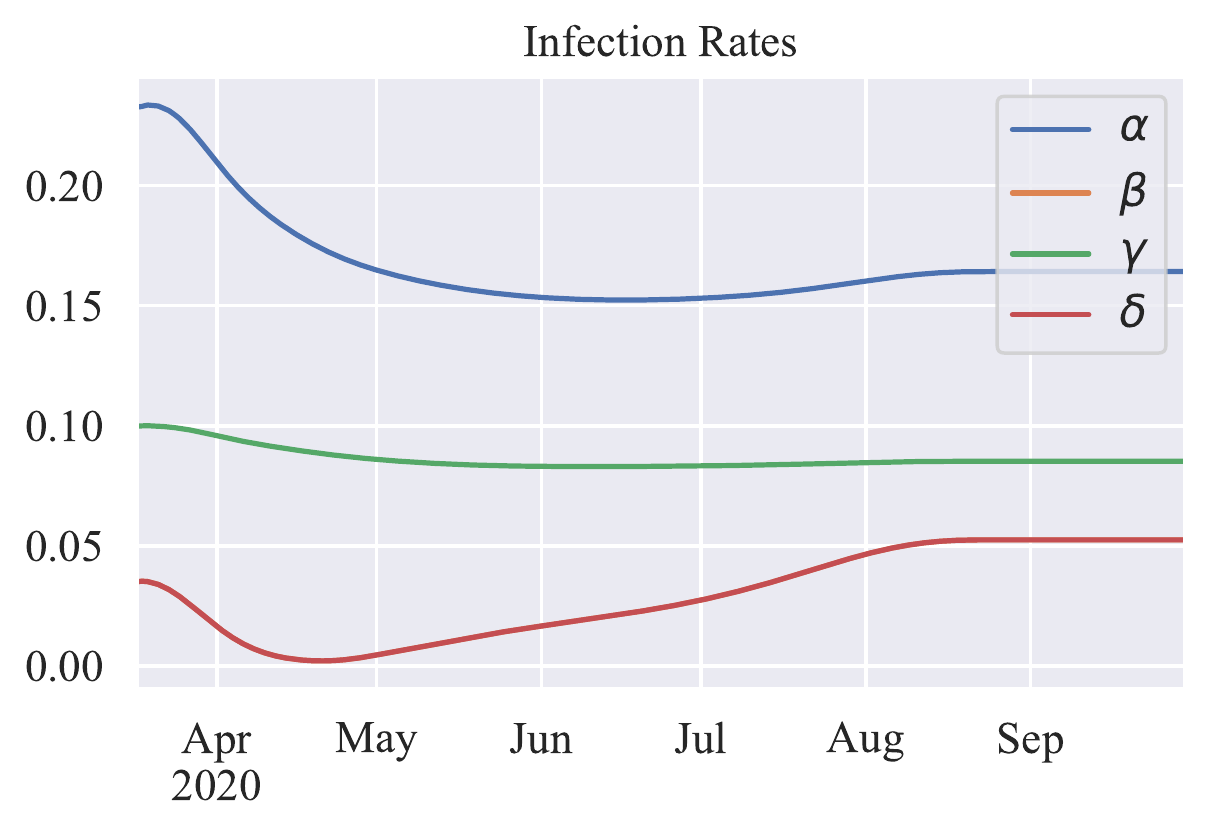|\hfil\|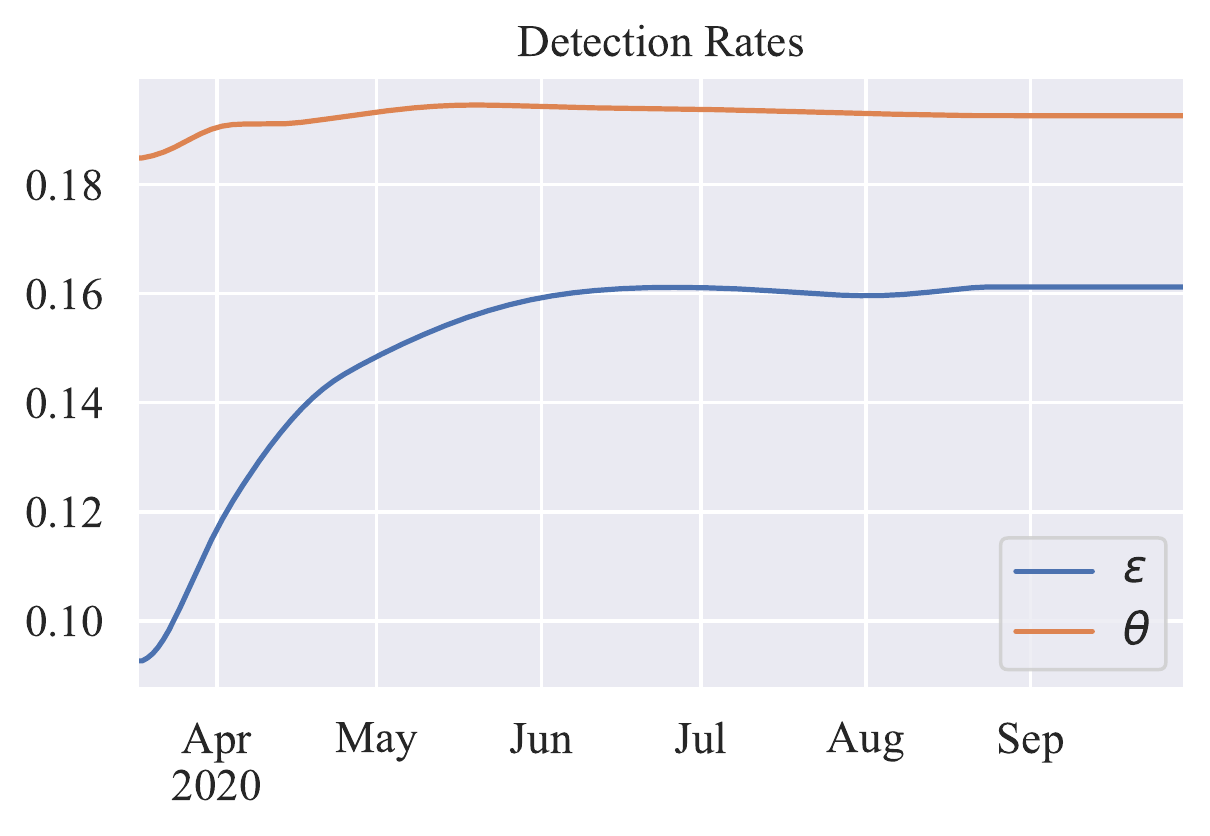|\hfil\|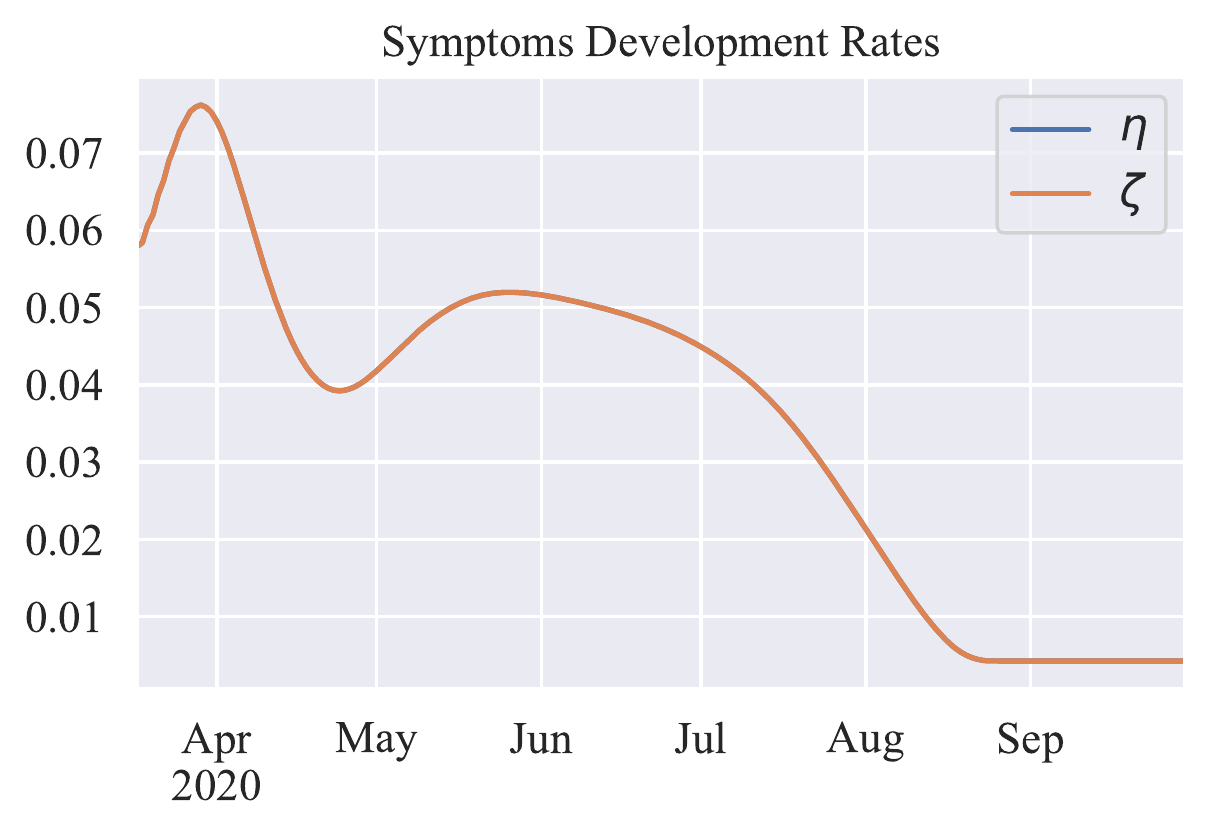|}
\hbox to\hsize{\|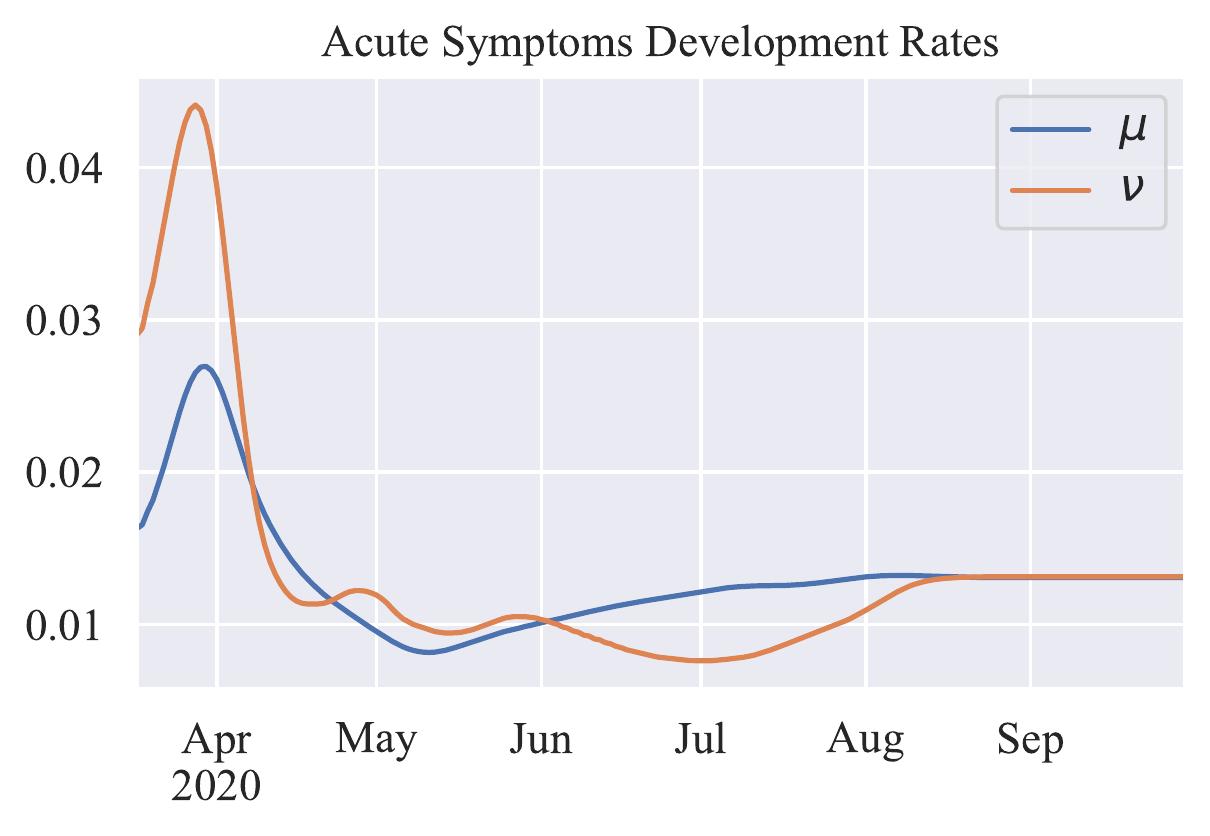|\hfil\|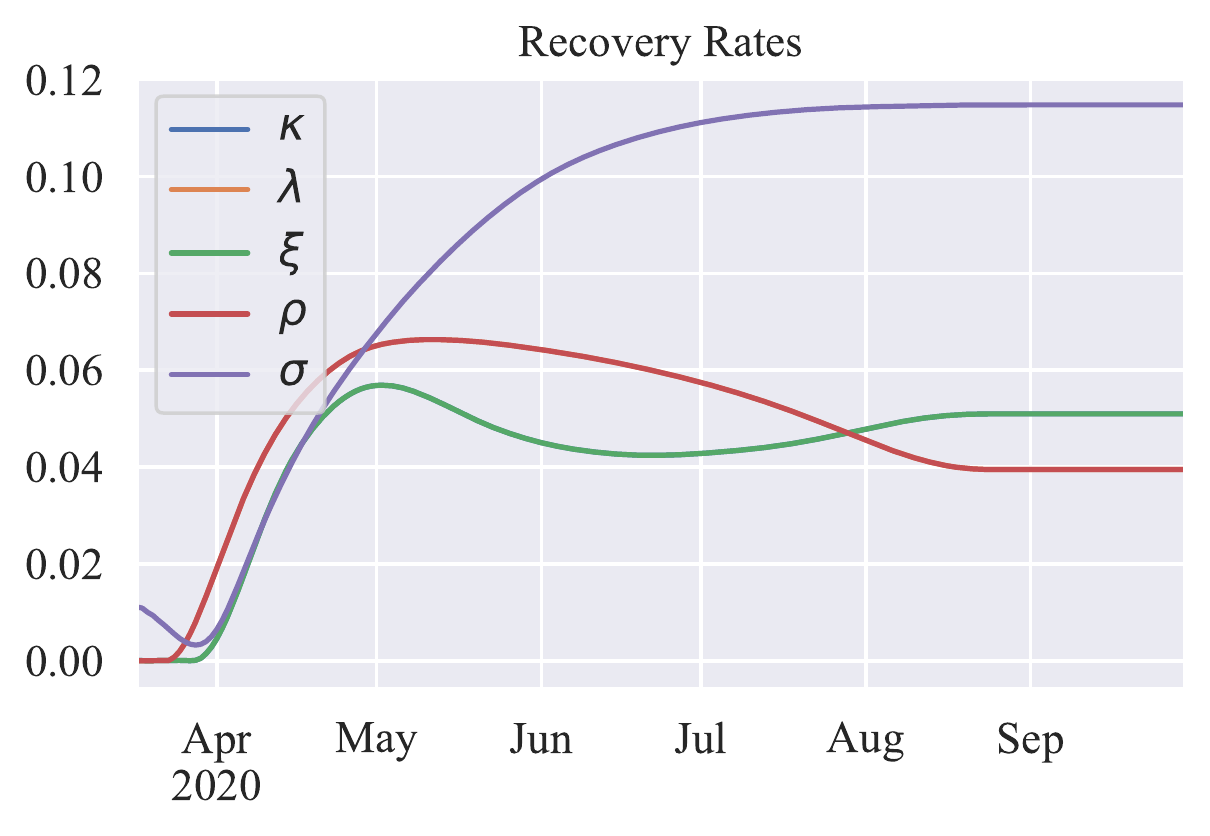|\hfil\|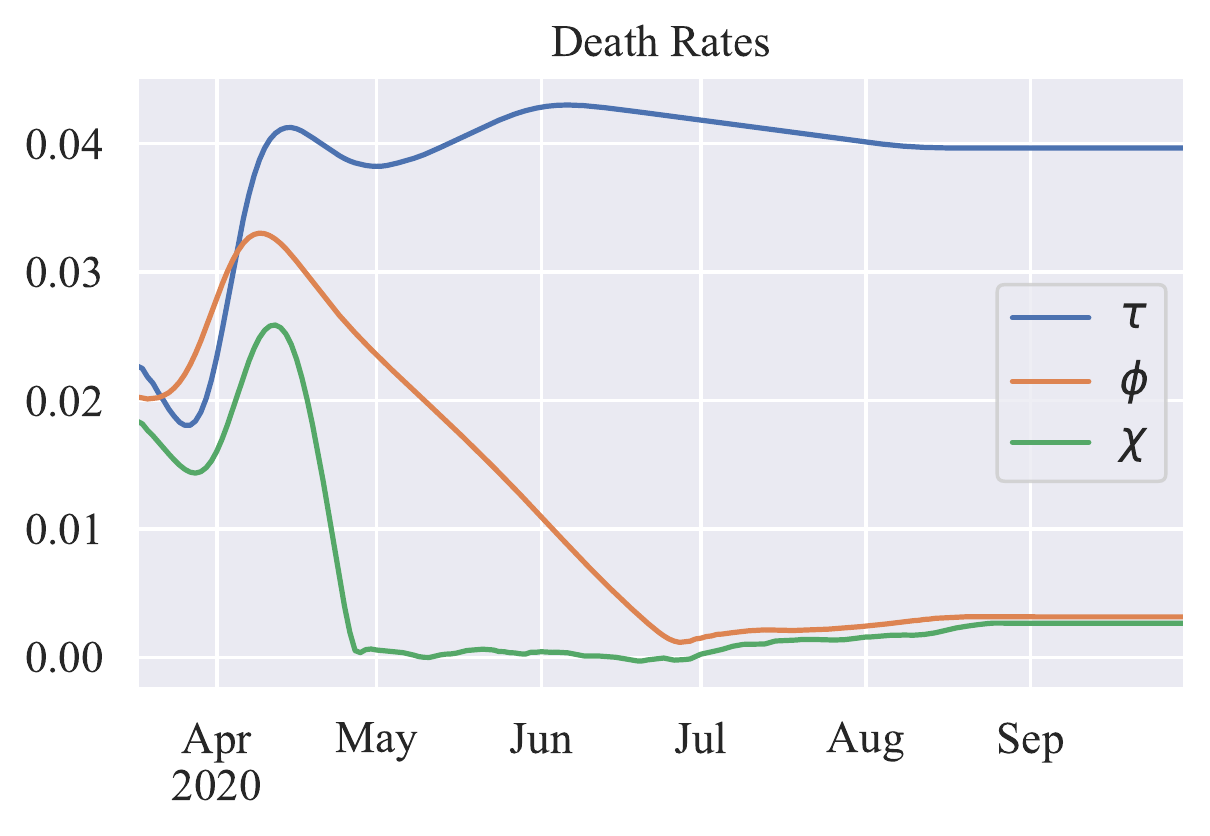|}
\caption{Time-variant parameters dynamics in France.}
\label{fig:sidarthe_params_fr}
\end{figure*}

% \begin{figure}[!ht]
%     \centering
%     \includegraphics[width=0.45\textwidth]{TNNLS/IT/sliding_tsize_exps/r0_varying_tsize.pdf}
%     \caption{Estimated $R_0$ over time in Italy with different training sizes.}
%     \label{fig:sliding_r0_sidarthe}
% \end{figure}

%The problem of learning the time-variant parameters through the minimization of the functional risk $F$, benefits from the regularization discussed in Section~\ref{sec:estimation_of_sir_coefficients}, which turns out to be crucial for restricting the solutions space.
%however we adopt precautions to narrow it down even further.
In addition to restricting the space of the solutions to the problem in Eq.\eqref{eq:min-problem} by means of the first order constraint (previously discussed),
we furthermore decided to reduce the number of learnable parameters. 
The learning of some pairs of parameters was tied together. In particular we tied $\beta$ and $\delta$, $\xi$ and $\kappa$, $\lambda$ and $\rho$, $\eta$ and $\zeta$. 
As initial conditions of the dynamical system we used the following values: $\i^0=\d^0=\hat{\d}(0)$, $\a^0=\r^0=\hat{\a}(0)$, $\t^0=\hat{\t}(0)$, $\h_d^0 =\hat{\h}(0)$, $\e^0 = \hat{\e}(0)$, $\s^0 = N - (\i^0+\d^0+\a^0+\r^0+\t^0+\h^0+\e^0)$, where $N$ is the size of the population considered.
We found that starting from a good initialization of the parameters $u(t)$ facilitates the learning and leads to better results.
In the Italian case, we initialized all the parameters with the values provided in \cite{Giordano2020},
%, that were set as piecewise-constant functions of time,
whereas for the French data set we initialised $u(t)$ as a constant (not time dependent) such that $R_0=1.95$.

\begin{table}
\centering
\caption{Model forecast on Italian and French Test data. Mean Absolute Percentage Error (MAPE), and the fraction of days $\th$ where the predictions are within an error threshold of $30\%$.}
    \begin{tabular}{lccccc}\toprule
         & \multicolumn{2}{c}{Italy} & & \multicolumn{2}{c}{France}\\
         \noalign{\smallskip}
         & MAPE  & $\th$ & & MAPE  & $\th$\\\midrule
        $\d$ & 16\% & 20/25 & & 41\% & 10/25 \\
        $\r$ &  8\% & 25/25 & & 84\% & 3/25 \\
        $\t$ & 19\% & 25/25 & & 16\% & 25/25 \\
        $\h$ &  4\% & 25/25 & &  2\% & 24/24 \\
        $\e$ &  6\% & 25/25 & &  5\% & 25/25 \\
        \bottomrule
    \end{tabular}
\label{tab:it_test_forecast}
\end{table}

% \begin{table}
% \centering
% \medskip
% \caption{Model forecast on Italian Test data. Mean Absolute Percentage Error (MAPE), and the fraction of days $F_d$ where the predictions are within an error threshold of $0.3$.}
% \begin{tabular}{lcccc}\toprule
% \multicolumn{4}{c}{Italy}\\
%  &Tolerance  & MAPE  & $F_d$\\\midrule
% $\d$ & 0.3 & 0.16 & 20/25\\
% $\r$ & 0.3 & 0.08 & 25/25\\
% $\t$ & 0.3 & 0.19 & 25/25\\
% $\h$ & 0.3 & 0.04 & 25/25\\
% $\e$ & 0.3 & 0.06 & 25/25\\
% \bottomrule
% \end{tabular}\label{tab:it_test_forecast}
% \end{table}

% \begin{table}
% \centering
% \medskip
% \caption{Model forecast on French Test data. Mean Absolute Percentage Error (MAPE), and the fraction of days $F_d$ where the predictions are within an error threshold of $0.3$.}
% \begin{tabular}{lccccc}\toprule
% \multicolumn{4}{c}{France}\\
%  &Tolerance  & MAPE  & $F_d$\\\midrule
% $\d$ & 0.3 & 0.41 & 10/24\\
% $\r$ & 0.3 & 0.84 & 3/25\\
% $\t$ & 0.3 & 0.16 & 25/25\\
% $\h$ & 0.3 & 0.02 & 24/24\\
% $\e$ & 0.3 & 0.05 & 25/25\\
% \bottomrule
% \end{tabular}\label{tab:fr_test_forecast}
% \end{table}

We performed model selection based on the best solution in the validation period.
%We believe an appropriate model selection is vital to prevent 
%over-fitting on the train data. 
The best models were obtained through grid search in the space of the hyper-parameters. 
In particular the positive constants $e_\t$, $e_\r$, $e_\d$, $e_\h$, $e_\e$ that weigh the 
terms $\overline\t$, $\overline\r$, $\overline\d$, $\overline\h_d$, $\overline\e$ in the functional risk $F$, 
the coefficient $m$ that acts on the derivative term $|\dot\cvar|^2/2$, the factor $e_p$ that
enforces the positivity of the solutions, the 
parameters $a$ and $b$ that define the $\omega$ function in Eq.~\eqref{eq:update_rule_params} span the hyper-parameters space of the learning method.
Based on the findings of the ablation study in Section~\ref{subsec:ablation}, we can narrow the search in the hyper-parameters space by setting $a=0$, $b\in [0.05,0.125]$ and $m\in[10^5,10^{11}]$. The 
learning rate $\pi_0$, was set to $10^{-5}$.

In order to provide a better understanding of the predictive capabilities of our model, we also report in Table~\ref{tab:it_test_forecast}
%\ref{tab:fr_test_forecast} 
the Mean Absolute Percentage Error (MAPE) and the fraction of \textit{test} days where the model predictions are beyond a certain tolerance error threshold, that we call $\th$.  We conclude this section with some remarks specific to each data set.

\paragraph{Italy.} The epidemic spreading in Italy is showed in Fig.~\ref{fig:sidarthe_fit_it}. It turns out that the model predictions are quite accurate over windows of a few weeks. The MAPE is on average always under $20\%$ for each state variable, moreover it remains below the tolerance threshold of $30\%$ in the test with the exception of the last 5 days of $\d$ (see Table~\ref{tab:it_test_forecast}).
The obtained basic reproduction number $R_0$ reflects consistently the epidemic spreading and its values are coherent with the results reported in \cite{cintia2020relationship} for single Italian regions.
It is worth mentioning that in this paper $R_0$ refers 
to the system dynamics interpretation associated with SIDARTHE model,
which might somewhat depart from other estimations. 
%not only outperforms it on the train data, but it also give better estimates of the %outbreak trend on both validation and test periods.
%Root mean squared errors for both models are shown in Table~\ref{tab:fit_italy_sidarthe}.
%
% comments on parameters PLEASE ADD HERE MORE
% The dynamic is similar, it seems a smoothed version of Nature's solution. 
% Sigma changes differently instead. 
%Beta/Delta goes both to 0. Why?
% R0 is underestimated at the beginning in Nature 
All the model parameters are presented in Figure~\ref{fig:sidarthe_params_it}. Interestingly enough, recovery rates $\sigma$, $\xi$ (tied with $\kappa$), $\rho$ (tied with $\lambda$) associated to ICU patients, detected and undetected symptomatic individuals, respectively, steadily increase over time, suggesting that hospitals are more and more prepared and trained to face the complications linked with the virus. 

\paragraph{France.} The French outbreak forecast is presented in Fig.~\ref{fig:sidarthe_fit_fr}. Despite the significant presence of noisy and missing data, we observe that the model succeeds in forecasting the state variables $\t,\h,\e$, always within the tolerance. Instead, the state variable $\r$ is clearly overestimated. We believe it is caused by two main reasons: first, the overestimation of $\d$ overflows to $\r$, and second an abrupt change in the data distribution, since the growth of target data $\hat{\d}(t)$ is not reflected by a similar increase of $\hat{\r}(t)$.
The trend of the model parameters (see Fig.~\ref{fig:sidarthe_params_fr}) is similar to the one obtained for Italy. Recovery rates tend to grow, detection rates quickly increase and then stabilize, symptoms development decreases significantly.

%In particular, parameters $\beta$ and $\delta$, that we recall are learned together, decrease significantly, almost down to $0$ after day 25. Since both represent the transmission rate due to contacts between a susceptible subject and an individual that was diagnosed as infected, we conjecture that they become negligible when diagnosed individuals are either isolated in hospitals or self-isolated at home.
%as soon as they are diagnosed.
%On the contrary, an increasing trend was found for some of the rates related to transitions on the healed state ($\lambda, \xi, \kappa$).%This phenomenon is also coherent with the evolution of COVID-19 outbreak. 
%\FloatBarrier

\section{Conclusions}\label{sec:conclusions}
In this paper we have discussed the problem of learning time-variant coefficients in compartmental models, with special attention on SIDARTHE~\cite{Giordano2020}, a recently introduced epidemiological 
model which offers a very rich description of the stages of an epidemic infection.
The major contribution of the paper consists of extending the
challenging features of SIDARTHE model to the case of time-variant
parameters that are properly learned from examples. 
This is carried out within a functional formulation of learning
which is based on a special interpretation of gradient-flow, which
allowed us to obtain a reliable forecasting of most 
critical indicators of the outbreak severity (i.e. deaths, recoveries and hospitalized in ICU individuals) of the COVID-19 epidemic outbreak. A massive experimentation in Italy and France has shown promising results over large windows in the last few months. We are 
confident that the proposed enrichment of SIDARTHE model, which is 
one of top level models for COVID-19 prediction, might be useful
for supporting critical policies to face the diffusion of the 
infection all around the world.
% This makes it possible to discover interesting boundary conditions 
% of the parameters that can efficiently be learned by using gradient 
% flow on an appropriate regularized functional. 
%The experimental analysis on the same data shows an improvement on the fitting of most of the variables involved. In addition, the smoothness of the estimated parameters might be regarded as an additional  precious information to be considered to support decisions for controlling the diffusion of infection. 

\appendix[Details on algorithmic issues]\label{appendix}
Given a function $u\in X$ and 
the temporal partition $0\equiv t=t_0<t_1<\cdots<t_N\equiv T$,
in this appendix we show how to explicitly construct its 
discrete counterpart as an element of the domain
of the function $f$ and subsequently how to rearrange its 
components to precisely define the quantities $\hat x$ and
$\hat f$ that are used in Eq.~\eqref{eq:reg}.

Let $\cvar_{i,j}:=\cvar_i(t_j)$
the components of the matrix $U\in\R^{18\times(N+1)}$ whose rows are the sampling on the
temporal partition $t_0,t_1,\dots,t_{N}$ of the coordinates of $u$.
Instead of working with matrices we exploit the isomorphism between $\R^{18\times(N+1)}$ and
$\R^{18(N+1)}$ that maps
\[
U\to\vect(U):=(u_{1,0},u_{2,0},\dots,u_{18,0},\dots,u_{1,N},\dots,u_{18,N})'.\]
With this mapping we can transform the initial point $u^0$ of the flow defined by
\eqref{eq:grad-flow-explicit} into the initial point $x^0$ necessary to 
start the gradient descent in Eq.~\eqref{eq:grad-desc}
\footnote{
We adopt the notation $(a_{ij})$ to denote the matrix
whose $ij$-th element is  $a_{ij}$.}: $\var^0=\vect((u^0_{i,j}))\in\R^{18(N+1)}$.

The relation between $x$ and $\hat x_j$ for $j=0,\dots,N$ 
and between $f$ and $\hat f$ instead naturally follows
once we explicitly state the relation between the
the domain of the function $f$ with the 
product space $(\R^{18})^{N+1}:=\prod_{\alpha=1}^{N+1}\R^{18}$.
Indeed the projections $p_j\colon (\R^{18})^{N+1}\to \R^{18}$
map $x\mapsto p_j(x)=(x_{18j+1},\dots, x_{18(j+1)})'=:\hat\var_j$ for 
$j=0,\dots,N$. Following the same line of thoughts it is natural to define
$\hat f\colon(\R^{18})^{N+1}\to[0,+\infty)$ simply as
\[
y\mapsto \hat f(y)\equiv\hat f(y_0,\dots,y_N):= f(c(y_0,\dots, y_N)),
\]
where $c\colon(\R^{18})^{N+1}\to\R^{18(N+1)}$ realizes the isomorphism
\[(y_0,\dots, y_N)\to
({y_0}_1,\dots,{y_0}_{18}, {y_1}_1,\dots,{y_1}_{18},\dots, {y_N}_{18})'.\]
Notice that with this definition $(\nabla \hat f)_i\in\R^{18}$
for all $i=1,\dots, N+1$, while $(\nabla f)_j\in\R$ for all
$j=1,\dots,18(N+1)$.
This being said all quantities used in  Eq.~\eqref{eq:reg}
are precisely defined once we specify that $\hat x$ is used as a 
shortcut for $\hat\var_0,\dots,\hat\var_N$.

%\section*{Code}
%The code to replicate all the experiments is available online at \url{https://github.com/sailab-code/learning-sidarthe}. We also released, for research purposes, a demo at \url{http://sailab.diism.unisi.it/covid-19} that is continuously updated with new models trained on upcoming data.

\section*{Acknowledgments}
We thank Stefano Merler (FBK) for insightful discussions.

\bibliographystyle{IEEEtran}
\bibliography{covid} 

% Generated by IEEEtran.bst, version: 1.12 (2007/01/11)
\begin{thebibliography}{10}
\providecommand{\url}[1]{#1}
\csname url@samestyle\endcsname
\providecommand{\newblock}{\relax}
\providecommand{\bibinfo}[2]{#2}
\providecommand{\BIBentrySTDinterwordspacing}{\spaceskip=0pt\relax}
\providecommand{\BIBentryALTinterwordstretchfactor}{4}
\providecommand{\BIBentryALTinterwordspacing}{\spaceskip=\fontdimen2\font plus
\BIBentryALTinterwordstretchfactor\fontdimen3\font minus
  \fontdimen4\font\relax}
\providecommand{\BIBforeignlanguage}[2]{{%
\expandafter\ifx\csname l@#1\endcsname\relax
\typeout{** WARNING: IEEEtran.bst: No hyphenation pattern has been}%
\typeout{** loaded for the language `#1'. Using the pattern for}%
\typeout{** the default language instead.}%
\else
\language=\csname l@#1\endcsname
\fi
#2}}
\providecommand{\BIBdecl}{\relax}
\BIBdecl

\bibitem{chan_familial_2020}
\BIBentryALTinterwordspacing
J.~F.-W. Chan, S.~Yuan, K.-H. Kok, K.~K.-W. To, H.~Chu, J.~Yang, F.~Xing,
  J.~Liu, C.~C.-Y. Yip, R.~W.-S. Poon, H.-W. Tsoi, S.~K.-F. Lo, K.-H. Chan,
  V.~K.-M. Poon, W.-M. Chan, J.~D. Ip, J.-P. Cai, V.~C.-C. Cheng, H.~Chen,
  C.~K.-M. Hui, and K.-Y. Yuen, ``A familial cluster of pneumonia associated
  with the 2019 novel coronavirus indicating person-to-person transmission: a
  study of a family cluster,'' \emph{The Lancet}, vol. 395, no. 10223, pp.
  514--523, Feb. 2020, publisher: Elsevier. [Online]. Available:
  \url{https://doi.org/10.1016/S0140-6736(20)30154-9}
\BIBentrySTDinterwordspacing

\bibitem{chinazzi_effect_2020}
\BIBentryALTinterwordspacing
M.~Chinazzi, J.~T. Davis, M.~Ajelli, C.~Gioannini, M.~Litvinova, S.~Merler,
  A.~P.~y. Piontti, K.~Mu, L.~Rossi, K.~Sun, C.~Viboud, X.~Xiong, H.~Yu, M.~E.
  Halloran, I.~M. Longini, and A.~Vespignani, ``\BIBforeignlanguage{en}{The
  effect of travel restrictions on the spread of the 2019 novel coronavirus
  ({COVID}-19) outbreak},'' \emph{\BIBforeignlanguage{en}{Science}}, vol. 368,
  no. 6489, pp. 395--400, Apr. 2020, publisher: American Association for the
  Advancement of Science Section: Research Article. [Online]. Available:
  \url{https://science.sciencemag.org/content/368/6489/395}
\BIBentrySTDinterwordspacing

\bibitem{bertozzi2020challenges}
A.~L. Bertozzi, E.~Franco, G.~Mohler, M.~B. Short, and D.~Sledge, ``The
  challenges of modeling and forecasting the spread of covid-19,'' \emph{arXiv
  preprint arXiv:2004.04741}, 2020.

\bibitem{Giordano2020}
\BIBentryALTinterwordspacing
G.~Giordano, F.~Blanchini, R.~Bruno, P.~Colaneri, A.~Di~Filippo, A.~Di~Matteo,
  and M.~Colaneri, ``Modelling the covid-19 epidemic and implementation of
  population-wide interventions in italy,'' \emph{Nature Medicine}, Apr 2020.
  [Online]. Available: \url{https://doi.org/10.1038/s41591-020-0883-7}
\BIBentrySTDinterwordspacing

\bibitem{zou_epidemic_2020}
\BIBentryALTinterwordspacing
D.~Zou, L.~Wang, P.~Xu, J.~Chen, W.~Zhang, and Q.~Gu,
  ``\BIBforeignlanguage{en}{Epidemic {Model} {Guided} {Machine} {Learning} for
  {COVID}-19 {Forecasts} in the {United} {States}},''
  \emph{\BIBforeignlanguage{en}{medRxiv}}, p. 2020.05.24.20111989, May 2020,
  publisher: Cold Spring Harbor Laboratory Press. [Online]. Available:
  \url{https://www.medrxiv.org/content/10.1101/2020.05.24.20111989v1}
\BIBentrySTDinterwordspacing

\bibitem{ferretti_quantifying_2020}
\BIBentryALTinterwordspacing
L.~Ferretti, C.~Wymant, M.~Kendall, L.~Zhao, A.~Nurtay, L.~Abeler-Dörner,
  M.~Parker, D.~Bonsall, and C.~Fraser, ``\BIBforeignlanguage{en}{Quantifying
  {SARS}-{CoV}-2 transmission suggests epidemic control with digital contact
  tracing},'' \emph{\BIBforeignlanguage{en}{Science}}, vol. 368, no. 6491, May
  2020, publisher: American Association for the Advancement of Science Section:
  Research Article. [Online]. Available:
  \url{https://science.sciencemag.org/content/368/6491/eabb6936}
\BIBentrySTDinterwordspacing

\bibitem{kermack1927contribution}
W.~O. Kermack and A.~G. McKendrick, ``A contribution to the mathematical theory
  of epidemics,'' \emph{Proceedings of the royal society of london. Series A,
  Containing papers of a mathematical and physical character}, vol. 115, no.
  772, pp. 700--721, 1927.

\bibitem{hethcote2000mathematics}
H.~W. Hethcote, ``The mathematics of infectious diseases,'' \emph{SIAM review},
  vol.~42, no.~4, pp. 599--653, 2000.

\bibitem{murray_mathematical_2002}
\BIBentryALTinterwordspacing
J.~D. Murray, \emph{\BIBforeignlanguage{en}{Mathematical {Biology}: {I}. {An}
  {Introduction}}}, 3rd~ed., ser. Interdisciplinary {Applied} {Mathematics},
  {Mathematical} {Biology}.\hskip 1em plus 0.5em minus 0.4em\relax New York:
  Springer-Verlag, 2002. [Online]. Available:
  \url{https://www.springer.com/gp/book/9780387952239}
\BIBentrySTDinterwordspacing

\bibitem{Anastass2020}
\BIBentryALTinterwordspacing
C.~Anastassopoulou, L.~Russo, A.~Tsakris, and C.~Siettos, ``Data-based
  analysis, modelling and forecasting of the covid-19 outbreak,'' \emph{PLOS
  ONE}, vol.~15, no.~3, pp. 1--21, 03 2020. [Online]. Available:
  \url{https://doi.org/10.1371/journal.pone.0230405}
\BIBentrySTDinterwordspacing

\bibitem{Lin2020}
Q.~Lin, S.~Zhao, D.~Gao, Y.~Lou, S.~Yang, S.~S. Musa, M.~H. Wang, Y.~Cai,
  W.~Wang, L.~Yang \emph{et~al.}, ``A conceptual model for the coronavirus
  disease 2019 (covid-19) outbreak in wuhan, china with individual reaction and
  governmental action,'' \emph{International journal of infectious diseases},
  vol.~93, pp. 211--216, 2020.

\bibitem{li_substantial_2020}
\BIBentryALTinterwordspacing
R.~Li, S.~Pei, B.~Chen, Y.~Song, T.~Zhang, W.~Yang, and J.~Shaman,
  ``\BIBforeignlanguage{en}{Substantial undocumented infection facilitates the
  rapid dissemination of novel coronavirus ({SARS}-{CoV}-2)},''
  \emph{\BIBforeignlanguage{en}{Science}}, vol. 368, no. 6490, pp. 489--493,
  May 2020, publisher: American Association for the Advancement of Science
  Section: Research Article. [Online]. Available:
  \url{https://science.sciencemag.org/content/368/6490/489}
\BIBentrySTDinterwordspacing

\bibitem{ferguson2020report}
N.~Ferguson, D.~Laydon, G.~Nedjati~Gilani, N.~Imai, K.~Ainslie, M.~Baguelin,
  S.~Bhatia, A.~Boonyasiri, Z.~Cucunuba~Perez, G.~Cuomo-Dannenburg
  \emph{et~al.}, ``Report 9: Impact of non-pharmaceutical interventions (npis)
  to reduce covid19 mortality and healthcare demand,'' Imperial College London,
  Tech. Rep., 2020.

\bibitem{britton2010stochastic}
T.~Britton, ``Stochastic epidemic models: a survey,'' \emph{Mathematical
  biosciences}, vol. 225, no.~1, pp. 24--35, 2010.

\bibitem{dehning_inferring_2020}
\BIBentryALTinterwordspacing
J.~Dehning, J.~Zierenberg, F.~P. Spitzner, M.~Wibral, J.~P. Neto, M.~Wilczek,
  and V.~Priesemann, ``\BIBforeignlanguage{en}{Inferring change points in the
  spread of {COVID}-19 reveals the effectiveness of interventions},''
  \emph{\BIBforeignlanguage{en}{Science}}, May 2020, publisher: American
  Association for the Advancement of Science Section: Research Article.
  [Online]. Available:
  \url{https://science.sciencemag.org/content/early/2020/05/14/science.abb9789}
\BIBentrySTDinterwordspacing

\bibitem{kwakernak1972linear}
H.~Kwakernak and R.~Sivan, \emph{Linear Optimal Control Systems. Willey \&
  Sons}.\hskip 1em plus 0.5em minus 0.4em\relax Inc, 1972.

\bibitem{lions_optimal_1971}
\BIBentryALTinterwordspacing
J.~L. Lions, \emph{\BIBforeignlanguage{en}{Optimal Control of Systems Governed
  by Partial Differential Equations}}, ser. Grundlehren der mathematischen
  Wissenschaften.\hskip 1em plus 0.5em minus 0.4em\relax Berlin Heidelberg:
  Springer-Verlag, 1971. [Online]. Available:
  \url{https://www.springer.com/gp/book/9783642650260}
\BIBentrySTDinterwordspacing

\bibitem{becker2000adaptive}
R.~Becker, H.~Kapp, and R.~Rannacher, ``Adaptive finite element methods for
  optimal control of partial differential equations: Basic concept,''
  \emph{SIAM Journal on Control and Optimization}, vol.~39, no.~1, pp.
  113--132, 2000.

\bibitem{ambrosio2008gradient}
L.~Ambrosio, N.~Gigli, and G.~Savar{\'e}, \emph{Gradient flows: in metric
  spaces and in the space of probability measures}.\hskip 1em plus 0.5em minus
  0.4em\relax Springer Science \& Business Media, 2008.

\bibitem{ambrosetti1995primer}
A.~Ambrosetti and G.~Prodi, \emph{A primer of nonlinear analysis}.\hskip 1em
  plus 0.5em minus 0.4em\relax Cambridge University Press, 1995, no.~34.

\bibitem{mcclelland1986parallel}
J.~L. McClelland, D.~E. Rumelhart, P.~R. Group \emph{et~al.}, ``Parallel
  distributed processing,'' \emph{Explorations in the Microstructure of
  Cognition}, vol.~2, pp. 216--271, 1986.

\bibitem{quarteroni_numerical_2007}
\BIBentryALTinterwordspacing
A.~Quarteroni, R.~Sacco, and F.~Saleri, \emph{\BIBforeignlanguage{en}{Numerical
  {Mathematics}}}, 2nd~ed., ser. Texts in {Applied} {Mathematics}.\hskip 1em
  plus 0.5em minus 0.4em\relax Berlin Heidelberg: Springer-Verlag, 2007.
  [Online]. Available: \url{https://www.springer.com/gp/book/9783540346586}
\BIBentrySTDinterwordspacing

\bibitem{paszke2017automatic}
A.~Paszke, S.~Gross, S.~Chintala, G.~Chanan, E.~Yang, Z.~DeVito, Z.~Lin,
  A.~Desmaison, L.~Antiga, and A.~Lerer, ``Automatic differentiation in
  pytorch,'' in \emph{NIPS-W}, 2017.

\bibitem{cintia2020relationship}
P.~Cintia, D.~Fadda, F.~Giannotti, L.~Pappalardo, G.~Rossetti, D.~Pedreschi,
  S.~Rinzivillo, P.~Bonato, F.~Fabbri, F.~Penone \emph{et~al.}, ``The
  relationship between human mobility and viral transmissibility during the
  covid-19 epidemics in italy,'' \emph{arXiv preprint arXiv:2006.03141}, 2020.

\end{thebibliography}

\end{document}